\newtheorem{theorem}{Theorem}
\newtheorem{lemma}{Lemma}
\newtheorem{definition}{Definition}
\newcommand{\piref}{\pi_\text{ref}}
\newcommand{\x}{\ensuremath{\boldsymbol{x}}}
\newcommand{\vc}{\ensuremath{\boldsymbol{c}}}
\newcommand{\bbD}{\ensuremath{\mathbb{D}}}
\newcommand{\kl}{\ensuremath{\bbD_{\text{KL}}}}
\newcommand{\bs}{\mathbf{s}}
\newcommand{\ba}{\mathbf{a}}
\newcommand{\bx}{\mathbf{x}}
\newcommand{\by}{\mathbf{y}}
\def\eqref#1{equation~\ref{#1}}
\def\1{\bm{1}}
\def\vc{{\bm{c}}}
\DeclareMathAlphabet{\mathsfit}{\encodingdefault}{\sfdefault}{m}{sl}
\SetMathAlphabet{\mathsfit}{bold}{\encodingdefault}{\sfdefault}{bx}{n}
\newcommand{\E}{\mathbb{E}}
\definecolor{darkblue}{rgb}{0, 0, 0.5}
\newcommand{\blfootnote}[1]{%
  \begingroup
  \renewcommand\thefootnote{}\footnote{#1}%
  \addtocounter{footnote}{-1}%
  \endgroup
}
\title{From $r$ to $Q^*$: Your Language Model is Secretly a Q-Function}
\author{Rafael Rafailov*  \\
Stanford University\\
\texttt{rafailov@stanford.edu} \\
\And
Joey Hejna*\\
Stanford University\\
\texttt{jhejna@stanford.edu} \\
\And
Ryan Park\\
Stanford University\\
\texttt{rypark@stanford.edu} \\
\And
Chelsea Finn\\
Stanford University\\
\texttt{cbfinn@stanford.edu} \\
}
\begin{document}

\maketitle

\begin{abstract}
Reinforcement Learning From Human Feedback (RLHF) has been critical to the success of the latest generation of generative AI models. In response to the complex nature of the classical RLHF pipeline, direct alignment algorithms such as Direct Preference Optimization (DPO) have emerged as an alternative approach. Although DPO solves the same objective as the standard RLHF setup, there is a mismatch between the two approaches. Standard RLHF deploys reinforcement learning in a specific token-level MDP, while DPO is derived as a bandit problem in which the whole response of the model is treated as a single arm. In this work we rectify this difference. We theoretically show that we can derive DPO in the token-level MDP as a general inverse Q-learning algorithm, which satisfies the Bellman equation. Using our theoretical results, we provide three concrete empirical insights. First, we show that because of its token level interpretation, DPO is able to perform some type of credit assignment. Next, we prove that under the token level formulation, classical search-based algorithms, such as MCTS, which have recently been applied to the language generation space, are equivalent to likelihood-based search on a DPO policy. Empirically we show that a simple beam search yields meaningful improvement over the base DPO policy. Finally, we show how the choice of reference policy causes implicit rewards to decline during training. We conclude by discussing applications of our work, including information elicitation in multi-turn dialogue, reasoning, agentic applications and end-to-end training of multi-model systems.

\blfootnote{\textbf{*} Denotes equal contribution}
\end{abstract}

\section{Introduction}
Reinforcement Learning from Human Feedback (RLHF) has become the defacto method for aligning large language models (LLMs) with human intent due to its success in a wide range of applications from summarization \citep{stiennon2022learning} to instruction following \citep{ouyang2022training}. By learning a reward function from human-labeled comparisons, RLHF is able to capture complex objectives  that are in-describedable in practice.  Following the success of \citep{ziegler2020finetuning}, numerous works have considered new algorithms for training and sampling from large models in various domains using techniques from reinforcement learning (RL). In particular direct alignment methods, such as Direct Preference Optimization (DPO) \citep{rafailov2023direct} have gained traction in recent months because of their simplicity \citep{zhao2023slic, azar2023general}. Instead of learning a reward function and then using RL, direct alignment methods use the relationship between reward functions and policies in the contextual bandit setting to optimize both simultaneously. Similar ideas have since been applied to vision language \citep{zhao2023beyond} and image generation models \citep{lee2023aligning}. \looseness=-1

While such direct alignment methods are purported to work the same as classical RLHF approaches that use policy gradient algorithms like PPO \citep{schulman2017proximal}, fundamental differences remain. For instance, classical RLHF methods optimize token-level value functions with a sparse reward at the terminal state. DPO on the other hand, operates only in a contextual bandits setting, treating the entire response as a single arm. This is despite the fact that tokens are generated one at a time, and dense rewards are commonly known to be beneficial in the RL community. While direct alignment algorithms are interesting, at present it is unclear if they can be applied to sequences in the same way as the underlying RL algorithms used in typical RLHF pipelines.

In this work we rectify this difference by deriving DPO within the token-level MDP setting present in large language models using the usual form of binary preference-feedback. We then show that DPO training implicitly learns a token-level reward function, for which the language models logits define the optimal Q function, or expected total future reward. We then demonstrate that DPO is able to flexibly model any possible dense reward function within the token MDP. 

Empirically, we use our theoretical derivations to justify three practical insights which we believe to be of use to the community. First, we show that despite being derived as a contextual bandit, the implicit rewards of a DPO model have a per-token interpretation. Second, we demonstrate that likelihood search over a DPO model is analogous to searching over a reward function during decoding as done by contemporary works \citep{liu2023making, feng2024alphazerolike}. Finally, we identify the choice of initial policy and reference distribution as being important in determining the trajectory of implicit rewards during training. 



\section{Related Work}

The problem of aligning policies with human intent using preference feedback has been a long studied problem in reinforcement learning \citep{akrour2011preference, wilson2012bayesian}. While the primary focus of RLHF was originally in control \citep{christiano2017deep}, following the success of \citet{ziegler2020finetuning} it has recently been broadly adopted by the language modeling \citep{ouyang2022training, nakano2021webgpt,stiennon2022learning, bai2022training} and even vision communities \citep{black2023training, lee2023aligning}. Most works in RLHF optimize a learned reward function, used only at the end of generation, with a policy graident style-method. Such approaches have been known to be unstable \citep{engstrom2020implementation} and hard to scale, while at the same time theoretically existing at an unusual intersection between contextual bandits and RL. In response, several direct alignment methods \citep{rafailov2023direct, azar2023general, zhao2023slic} have been developed which simplify the RLHF pipeline by learning a policy from preference data without an intermediate reward function. Such methods however, derived solely as contextual bandits, leave several theoretical and practical questions unanswered which we seek to address.

First, though direct alignment methods treat the LLM as a bandit, prior works have demonstrated that it is possible to use dense rewards \citet{zelikman2022star, chan2024dense, pan2023let} or even approximate dynamic programming \citep{snell2022offline}. Moreover, using the regret-model of preferences \citep{knox2023models, knox2024learning}, Contrastive Preference Learning \citep{hejna2024contrastive} is able to use direct alignment for general MDPs, instead of the specific token MDP used in RLHF. Our work shows how DPO can be interpreted as optimizing a per-token reward function, which in practice is restricted to the family of optimal advantage functions.

Second, if DPO does not learn a reward function, can we still use its reward or value? Prior works have considered using best-of-K \citep{mudgal2023controlled} or tree search \citep{liu2023making} for alignment with a value function \cite{kim2022critic, li2017learning} or discriminator \citep{yang2021fudge}. Using the implicit reward, we show that likelihood search results in a similar solution for direct alignment. 

Our work builds on foundational knowledge in maximum entropy RL \citep{ziebart2010modeling} and inverse RL \citep{ziebart2008maximum, ng1999policy, cao2021identifiability}. In particular, we leverage the mapping between $Q$-functions and reward functions under a fixed policy as first done in inverse RL by \citet{garg2022iqlearn}. Related to our work, \cite{nachum2017bridginggapvaluepolicy} uses similar derivations for reinforcement learning in control and \citet{watson2023coherent} does so for inverse RL from demonstration. \citet{hejna2024inverse} exploit this relationship for RLHF. While related, these works still require an additional loop of reinforcement learning optimization, which we dispose of in our formulation of feedback learning for LLMs.  In the LLM domain, \cite{yu2024mathcalbcodervaluebaseddeepreinforcement} uses pre-trained models as priors for Q-learning, while \citet{cundy2023sequencematch} considers a similar formulation for imitation learning. In this work instead, we formulate preference-based learning as Q-learning.

\section{Preliminaries}

In this section we first define the per-token MDP for large language models, and then describe how it relates to classic RLHF approaches and direct alignment algorithms, specifically DPO. We operate in the typical RLHF setting where we have a dataset $\mathcal{D}=\{(\textbf{x}^{(i)}, \textbf{y}^{(i)})\}_{i=1}^N$ of language prompts $\textbf{x}$ and target answers $\textbf{y}$, which can each individually be broken down into a sequence of tokens, for example $\bx=(x_0, \ldots, x_m)$, from a fixed discrete vocabulary $\mathcal{A}$. Throughout this section we will use the $\bx$, $\by$ notation for the contextual bandit framing where the entire response $\by$ is the action, but will use state $\bs$ and action $\ba$ notation from RL literature for describing sequences at the token-level.

\subsection{The Token-level MDP for Large Language Models}\label{section:tokenMDP}
We define the token level MDP as a tuple $\mathcal{M} = (\mathcal{S}, \mathcal{A}, f, r, \rho_0)$, where the state space $\mathcal{S}$ consists of all tokens generated so far (i.e. $\bs_t=\{x_0, \ldots, x_m, y_0, \ldots, y_t\}$) and the action space is the vocabulary of tokens $\mathcal{A}$. The dynamics $f$ are the deterministic transition model between tokens $f(\bs,\ba) = \bs|\ba$, where $|$ is concatenation. The initial state distribution $\rho_0$ is a distribution over prompts $\bx$, where an initial state $\bs_0$ is comprised of the tokens from $\bx$. In RLHF, the reward function is learned from human feedback over preferences between responses which we will denote using trajectories $\tau$ at the token level. As is typically done \citep{ziegler2020finetuning, stiennon2022learning}, we assume that preference trajectories start at the same state (initial propmpt) and end in a terminal state (\textbf{EOS} token), from which future rewards are zero. 
In this token level MDP, the corresponding Bradley-Terry preference model \cite{bradley1952rankanalysis, christiano2017deep} is 
\begin{equation}\label{eq:dense-bradley-terry}
    p^*(\tau^w \succeq \tau^l)=\frac{\exp\left(\sum_{i=1}^N r(\bs_i^w, \ba_i^w)\right)}{\exp\left(\sum_{i=1}^N r(\bs_i^w, \ba_i^w)\right)+ \exp\left(\sum_{i=1}^M r(\bs_i^l, \ba_i^l)\right)}.
\end{equation}
which gives the probability that the ``win'' trajectory $\tau^w$ of length $N$ is preferred to the ``loss'' trajectory $\tau^l$ of length $M$. Now that we have defined the token level MDP, we can show how it relates to both classic and direct alignment RLHF methods.

\subsection{The Classical RLHF Methods}

Most classical RLHF approaches \citep{ziegler2020finetuning, bai2022constitutional, ouyang2022training} first learn a reward function from human feedback on prompt and response pairs $(\bx, \by^w, \by^l)$, then optimize it with a policy gradient-based method like PPO \citep{schulman2017proximal} with an entropy-bonus using the following KL-constrained RL objective
\begin{equation}\label{eq:multi_step_RL}
\max_{\pi_{\theta}}  \mathbb{E}_{a_t \sim \pi_{\theta}(\cdot | \bs_t)}\left[\sum_{t=0}^T
(r(\bs_t, \ba_t) + \underbrace{\beta\log\piref(\ba_t
|\bs_t)}_{\text{KL penalty}}) + \beta\mathcal{H}(\pi_{\theta})|\bs_0\sim\rho(\bs_0)\right]
\end{equation}

where $\piref$ is a reference policy, often resulting from supervised finetuning, from which the learned policy should not significantly deviate. However, in classic RLHF methods the reward function is learned as a contextual bandit with the preference model
\begin{equation*}
\label{eq:bandit_pref}
    p^*(\by^w \succeq \by^l) = \frac{\exp r(\bx,\by^w)} {\exp r(\bx,\by^w) + \exp r(\bx,\by^l)}
\end{equation*}
and is thus only applied at the final timestep for the last action where $\ba$ is \textbf{EOS}. In practice the actual reward used in the token-level PPO is 
\begin{equation}\label{eq:token_reward}
    r(\bs_t, \ba_t) =
    \begin{cases}
      \beta \log \piref(\ba_t | \bs_t), & \text{if $\bs_{t+1}$ is not terminal} \\
      r(\bx, \by) +\beta \log \piref(\ba_t | \bs_t), & \text{if $\bs_{t+1}=\by$ is terminal} 
    \end{cases}
\end{equation}
in a maximum entropy formulation. This leads to an interesting contradiction where the reward function $r$ is treated like a bandit, but the actual RL value function and optimization is done per-token in practice.

\subsection{Direct Preference Optimization}
Unlike classical RLHF, DPO, as derived in \citet{rafailov2023direct}, stays entirely within the contextual bandits setting entirely and also uses the bandit-based preference model in \cref{eq:bandit_pref}. To circumvent the need for an RL algorithm, DPO uses the well-known closed form solution to the KL-contextual bandit version of the RL problem posed in \cref{eq:multi_step_RL} \citep{ziebart2008maximum, levine2018reinforcement}:
\begin{equation*}
\pi^*(\by|\bx) = \frac{1}{Z(\bx)}\piref(\by|\bx)e^{r(\bx, \by)}
\end{equation*}
where $\pi^*$ is the optimal policy and $Z(\bx)$ is the partition function that normalizes it. DPO re-arranges this equation to solve for reward as $r(\bx,\by) = \beta \log \pi^*(\by|\bx) - \beta \log \piref(\by|\bx) - Z(\bx)$. Substituting this relationship into the standard binary cross-entropy loss function used for reward modeling yields the DPO loss equation as the partition function $Z(\bx)$ cancels from the Bradley Terry model.
\begin{equation}\label{eq:optimum_model}
    \mathcal{L}_\text{DPO}(\pi_{\theta}; \piref) = -\mathbb{E}_{(\bx, \by^w, \by^l)\sim \mathcal{D}}\left[\log \sigma \left(\beta \log \frac{\pi_{\theta}(\by^w\mid \bx)}{\piref(\by^w\mid \bx)} - \beta \log \frac{\pi_{\theta}(\by^l\mid \bx)}{\piref(\by^l\mid \bx)}\right)\right]
\end{equation}
For brevity we use $\sigma$ to denote the logistic function. In the next section, we show how an alternative derivation of DPO can also cast its optimization within the token-level MDP.

\section{Theoretical Insights}

In this section we explore how DPO can theoretically be cast into the token-level MDP, and explore the consequences of doing so. First, we provide a token level derivation of DPO under the assumptions in \cref{section:tokenMDP}. Next, we show that even in the token MDP, DPO is able to fit any reward function in the multi-step Bradley Terry preference model \cref{eq:dense-bradley-terry}. Ultimately, this shows that DPO can potentially be used for more sequential optimization tasks, like multi-turn interactions or even multi-modal generation.

\subsection{DPO as a $Q$-function in the Token Level MDP}

\noindent \textbf{RL in the Token-level MDP.} While the original derivation of DPO relies on the fact that $Q^*(\bx, \by) = r(\bx, \by)$, this relationship does not hold in the token-level MDP. To resolve this, we need to develop new mathematical results that will allow us to relate the reward function in the Token-level Bradley Terry model \cref{eq:dense-bradley-terry} to the corresponding optimal policy $\pi*$. In the general maximum entropy RL setting, the fixed point solution of \cref{eq:multi_step_RL} is given by \citep{ziebart2010modeling} as
\begin{equation}\label{eq:policy}
    \pi^*(\ba_t|\bs_t) = e^{(Q^*(\bs_t, \ba_t)-V^*(\bs_t))/\beta}
\end{equation}
where $\pi^*(\ba|\bs)$ is the optimal policy and $Q^*(\bs, \ba)$ is the optimal Q-function which models the total future reward from $(\bs, \ba)$ under $\pi^*$. The optimal value function $V^*$ is a function of $Q^*$, 
\begin{equation}\label{eq:value}
    V^*(\bs_t) =  \beta\log \sum_{\ba \in \mathcal{A}} e^{Q^*(\bs_t, \ba)/\beta} 
\end{equation}
such that the policy $\pi^*$ integrates to one. Unfortunately unlike in the bandits setting this relationship gives us no specific information about the reward function $r$ at a single state action pair since the optimal policy optimizes for total future returns as estimated by $Q$. To do so, we will need to consider the relationship between $Q^*$ and $r$. 

\noindent \textbf{From $r$ to $Q^*$.}
The relationship between future returns and the current timestep is captured by the belmman equaitons which are satisifed by any valid Q-function. We write this below for the optimal policy $\pi^*$ under the reward $r$ with a KL divergence penalty:
\begin{equation}\label{eq:critic}
    Q^*(\bs_t, \ba_t) =
    \begin{cases}
      r(\bs_t, \ba_t) + \beta \log \piref(\ba_t| \bs_t) + V^*(\bs_{t+1})  , & \text{if $\bs_{t+1}$ is not terminal} \\
      r(\bs_t, \ba_t) + \beta \log \piref(\ba_t| \bs_t), & \text{if $\bs_{t+1}$ is terminal} 
    \end{cases}
\end{equation}
We can then rearrange the bellman equation for the optimal $Q$-function in terms of the reward. This style of relationship was first explored by \citet{garg2022iqlearn} in imitation learning and later in \citet{hejna2024inverse} for preference-based RL. However, these works \emph{require} the use of a discount factor $\gamma < 1$ which is typically not used in RLHF. In the appendix we prove the following  Lemma which shows that this relationship is indeed one-to-one in the token MDP as well.

\begin{lemma} \label{lemma:r_to_q} Under mild assumptions, there is a bijection between reward functions $r(\bs_t, \ba_t)$ and corresponding optimal Q-functions $Q^*(\bs_t, \ba_t)$ in the token MDP.
\end{lemma}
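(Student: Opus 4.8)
The plan is to exhibit two explicit maps and show they are mutual inverses. Given a reward $r$, the forward map $\Phi$ sends it to the optimal soft Q-function $Q^*$ determined by the soft Bellman equation \cref{eq:critic} together with the value recursion \cref{eq:value}; given a Q-function $Q^*$, the backward map $\Psi$ sends it to the reward obtained by simply rearranging \cref{eq:critic},
\[
r(\bs_t, \ba_t) = Q^*(\bs_t, \ba_t) - \beta \log \piref(\ba_t \mid \bs_t) - V^*(\bs_{t+1}),
\]
where $V^*(\bs_{t+1})$ is computed from $Q^*$ via \cref{eq:value} and is set to $0$ whenever $\bs_{t+1}$ is terminal. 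The backward map is manifestly well-defined and pointwise explicit, so essentially all the work is in (i) showing $\Phi$ is well-defined and (ii) verifying $\Psi \circ \Phi = \mathrm{id}$ and $\Phi \circ \Psi = \mathrm{id}$. The ``mild assumptions'' I would invoke are exactly those that make $\Phi$ well-defined: that the token MDP is a deterministic tree (each non-initial state has a unique predecessor, since a state records its entire token history) in which every trajectory reaches a terminal EOS state within a bounded horizon.

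The hard part — and the reason the cited results needed a discount factor $\gamma < 1$ — is establishing existence and uniqueness of $Q^*$ for the forward map. In the infinite-horizon discounted setting one argues that the soft Bellman backup is a $\gamma$-contraction and invokes Banach's fixed-point theorem; with no discounting that argument is unavailable. I would instead exploit the finite, terminating tree structure: define $Q^*$ by backward induction on the distance to termination. At transitions into a terminal state the recursion bottoms out with $Q^*(\bs_t, \ba_t) = r(\bs_t, \ba_t) + \beta \log \piref(\ba_t \mid \bs_t)$ (base case, using $V^* = 0$ at terminal states), and for non-terminal $\bs_{t+1}$ the value $V^*(\bs_{t+1})$ is already determined from lower levels of the recursion via \cref{eq:value}, so \cref{eq:critic} defines $Q^*(\bs_t, \ba_t)$ uniquely. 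Boundedness of the horizon guarantees the induction terminates, and determinism plus the tree structure guarantees each $(\bs_t,\ba_t)$ is assigned exactly one value; I would also check that the resulting $Q^*$ is genuinely the optimal soft Q-function, i.e. that the induced policy \cref{eq:policy} solves \cref{eq:multi_step_RL}, which follows from the standard max-entropy optimality argument applied level by level.

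With $\Phi$ in hand, the two composition identities are short. For $\Psi \circ \Phi = \mathrm{id}$: by construction $Q^* = \Phi(r)$ satisfies \cref{eq:critic} with the given $r$, so rearranging recovers exactly $r$. For $\Phi \circ \Psi = \mathrm{id}$: given any $Q^*$, the reward $r = \Psi(Q^*)$ is defined so that $Q^*$ satisfies the soft Bellman equation with reward $r$ and with $V^*$ its own log-sum-exp; since $\Phi(r)$ is the unique function satisfying that same equation, uniqueness forces $\Phi(r) = Q^*$. This last step also clarifies the precise domain of the bijection: every function $Q^*\colon \gS\times\gA\to\sR$ arises as the optimal soft Q-function of the reward $\Psi(Q^*)$, so the set of ``optimal Q-functions'' is all of $\sR^{\gS\times\gA}$ and the claimed bijection is between the full reward space and the full Q-function space. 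I expect the only real friction to be bookkeeping the terminal boundary condition $V^*=0$ consistently between \cref{eq:critic} and \cref{eq:value}, and stating the horizon/termination assumption sharply enough that the backward induction is valid.
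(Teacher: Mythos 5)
Your proposal is correct and matches the paper's proof in all essentials: both arguments rest on the deterministic, bounded-horizon tree structure of the token MDP, the terminal boundary condition $V^*(\bs_T)=0$, and inversion of the soft Bellman equation \cref{eq:critic}, with backward induction from the leaves doing the work that the discounted contraction argument does in \citet{garg2022iqlearn}. The only difference is packaging --- the paper proves injectivity by contradiction at the \emph{first} state-action pair (scanning back from a leaf) where two rewards differ, and surjectivity via the explicit inverse formula, whereas you obtain both directions at once by exhibiting mutual inverses whose uniqueness is supplied by the same backward induction; as a minor bonus, your inverse formula correctly retains the $\beta \log \piref(\ba_t \mid \bs_t)$ term that the paper's surjectivity step omits.
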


This leads us to a rather interesting conclusion -- that an LLM is \emph{always} the optimal soft Q-functions for \emph{some} reward function in the token MDP. Consider any LLM which outputs logits $l_\theta$ and temperature parameter $\beta$. As is common practice, we take the sampling policy $\pi$ to be the softmax over tokens modulated by temperature parameter $\beta$ -- which is precisely \cref{eq:policy} where $Q^* = l_\theta$ because the value optimal function $V^*$ is precisely $\beta \log Z(\bs_t)$, normalizing the distribution. The corresponding reward function may not be smooth or well-behaved. Notably, the logits have a free parameter due to the softmax. While this free-parameter results in the same optimal policy per later arguments, it means the sequence of values may not be smooth. The question then becomes how to finetune the LLM such that it is the optimal Q-function for a reward function $r$ that aligns with human preferences. To do so, we will complete our derivation of DPO in the token MDP.

\textbf{DPO learns our best estimate of $Q^*$.} Now that we have established a bijection between $r$ and $Q^*$, we can derive a token-level version of DPO to align the implicit reward, induced by the $Q$ function represented by the language model, with that of the best estimate of reward, according to Bradley-Terry model in \cref{eq:dense-bradley-terry}. To do so, we need to represent the sum of rewards first in terms of the $Q$-function $Q^*$, and then in terms of the policy $\pi^*$. We complete the first step by inverting the Bellman equation in \cref{eq:critic} and substituting it into the sum of rewards over a trajectory $\tau = \{\bs_1, \ba_1, \ldots, \ba_{T-1}, \bs_T\}$.
\begin{align*}
    \sum_{t=0}^{T-1}r(\bs_t, \ba_t) 
    &= \sum_{t=0}^{T-1}\left(Q^*(\bs_t, \ba_t) - \beta \log \piref(\ba_t | \bs_t) - V^*(\bs_{t+1})\right) = \\ 
    &= Q^*(\bs_0, \ba_0) - \beta \log \piref(\ba_0 | \bs_0) + \sum_{t=1}^{T-1}Q^*(\bs_t, \ba_t) - V^*(\bs_{t})  - \beta \log \piref(\ba_t | \bs_t)
\end{align*}
The equality follows from $V^*(\bs_T)=0$ and re-arranging the sum to isolate $t=0$. As $V^*$ is written entirely in terms of $Q^*$ and $\beta$ per \cref{eq:value}, we have expressed the sum of return over the sequence just in terms of $Q^*$. Next, we exchange $Q^*$ for $\pi^*$. We can log-linearize \cref{eq:policy} as $\beta \log \pi^*(\ba_t| \bs_t) = Q^*(\bs_t, \ba_t) - V^*(\bs_t)$. This is equivalent to stating that the language model probabilities are just the softmax over $l_\theta = Q^*$ with temperature $\beta$. Continuing from the above, with this substitution we get
\begin{equation*}
    = Q^*(\bs_0, \ba_0) - \beta \log \piref(\ba_0 | \bs_0) + \sum_{t=1}^{T-1} \beta \log \frac{\pi^*(\ba_t|\bs_t)}{\piref(\ba_t|\bs_t)} =V^*(\bs_0) + \sum_{t=0}^{T-1} \beta \log \frac{\pi^*(\ba_t|\bs_t)}{\piref(\ba_t|\bs_t)} 
\end{equation*}
where the final step results from adding and subtracting $V^*(\bs_0)$ and applying the substitution again. Now, this representation for the sum of rewards in terms of the optimal policy can be directly substituted into the preference model in \cref{eq:dense-bradley-terry}, where the $V^*(\bs_0)$ term will cancel just as $Z(\bx)$ did in the original DPO derivation assuming $\tau^w$ and $\tau^l$ start at the same state $\bs_0$, giving us the policy-induced preference model \looseness=-1
\begin{equation}
\label{eq:policy_pref}
    p_{\pi^*}(\tau^w \succeq \tau^l) = \sigma \left(\sum_{t=0}^{N-1} \beta \log \frac{\pi^*(\ba_t^w|\bs_t^w)}{\piref(\ba_t^w|\bs_t^w)} - \sum_{t=0}^{M-1} \beta \log \frac{\pi^*(\ba_t^l|\bs_t^l)}{\piref(\ba_t^l|\bs_t^l)}\right).
\end{equation}
To derive the final DPO loss function, we can take the KL-divergence between the empirical preference model of our dataset $p_\mathcal{D}$ and the preference model implied by a learned policy $p_{\pi_\theta}$, $\kl (p_\mathcal{D} || p_{\pi_\theta})$. This results in 
\begin{equation}\label{eq:DPO}
    \mathcal{L}(\pi_{\theta}, \mathcal{D}) = -\mathbb{E}_{(\tau_w, \tau_l)\sim \mathcal{D}}\left[\log \sigma\left(\left( \sum_{t=0}^{N-1}\beta \log\frac{\pi^*(\ba_t^w| \bs_t^w)}{\piref(\ba_t^w| \bs_t^w)}\right)- \left( \sum_{t=0}^{M-1}\beta \log\frac{\pi^*(\ba_t^l| \bs_t^l)}{\piref(\ba_t^l| \bs_t^l)}\right)\right)\right]
\end{equation}
In the next section we demonstrate that DPO can learn any dense reward function in the token-level MDP.

\subsection{Token-Level DPO Can Parameterize Any Dense Reward Function.}\label{secttion:dpoisuniversal}

In the previous section we derived DPO using the bijection between reward functions and optimal $Q$-functions uniquely available in the token-level MDP. An alternative view of DPO casts it as restricting the learned reward function such that it belongs to the class optimal advantage functions $A^*(\bs,\ba) = Q^*(\bs,\ba) - V^*(\bs)$ from which an optimal policy is readily obtained per \cref{eq:policy}. Here we show that this restriction does not limit the class of reward functions we can represent. We begin by expanding the definition of equivalency used in \citet{rafailov2023direct} to the broader class of potential-based reward shaping functions:

\begin{definition}\label{def:equivalence}
Two reward functions $r(\bs_t, \ba_t)$ and $r'(\bs_t, \ba_t)$ are equivalent if there exists a potential function $\Phi(\bs)$, such that $r'(\bs_t, \ba_t) =r(\bs_t, \ba_t) + \Phi(\bs_{t+1})  - \Phi(\bs_{t})$.
\end{definition}

In \citet{ng1999policy}'s seminal work, the authors proved that two equivalent reward functions defined per \cref{def:equivalence} have the same optimal policy. By log-linearizing the optimal policy fixed point in \cref{eq:policy} and substituting in the Bellman equation from \cref{eq:critic} \citep{nachum2017bridginggapvaluepolicy, watson2023coherent}, we have
\begin{equation}
\label{eq:advantage}
    \beta \log\frac{\pi^*(\ba_t | \bs_t)}{\piref(\ba_t|\bs_t)} = r(\bs_t, \ba_t) + V^*(\bs_{t+1}) - V^*(\bs_{t}).
\end{equation}
This is precisely the optimal advantage function, where $V^*$ directly follows the form of a potential shaping function. \citet{watson2023coherent} first used this derivation to arrive at a ``coherent'' reward function and follow-ups arrived at the same conclusion by noting that using the advantage as reward preserves the optimal policy \citep{knox2024learning, hejna2024contrastive}. Unlike prior works, however, we demonstrate that this re-parameterization also leads to the same exact \emph{preference} distribution as $r$. 

\begin{theorem}\label{theorem:equiv} Given a reference policy $\piref$ and a parameter $\beta>0$ all reward classes consistent with the Plackett-Luce (and Bradley-Terry) models in \cref{eq:dense-bradley-terry} can be represented with the a re-parameterization of the form 
\begin{equation}\label{eq:reward_param}
    r(\bs, \ba) = \beta \log \pi(\ba| \bs) - \beta \log \piref(\ba| \bs)
\end{equation}
within the token MDP where $V^*(\bs_t) = 0$ for all terminal states.
\end{theorem}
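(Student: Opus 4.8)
The plan is to exhibit, for an arbitrary reward $r$ consistent with \cref{eq:dense-bradley-terry}, an explicit policy $\pi$ whose induced reward $\beta \log \pi(\ba|\bs) - \beta \log \piref(\ba|\bs)$ lies in the same preference-equivalence class as $r$. The natural candidate is the optimal policy of $r$ itself: by \cref{lemma:r_to_q} the reward $r$ determines a unique optimal $Q^*$, and hence an optimal policy $\pi^*$ and value function $V^*$ through \cref{eq:policy,eq:value}. Substituting these into the log-linearized fixed point already yields \cref{eq:advantage}, namely $\beta \log \frac{\pi^*(\ba_t|\bs_t)}{\piref(\ba_t|\bs_t)} = r(\bs_t, \ba_t) + V^*(\bs_{t+1}) - V^*(\bs_t)$. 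So the candidate reward $r'(\bs_t, \ba_t) := \beta \log \pi^*(\ba_t|\bs_t) - \beta \log \piref(\ba_t|\bs_t)$ is precisely the potential-based reshaping of $r$ from \cref{def:equivalence} with potential $\Phi = V^*$, and it already has the form claimed in \cref{eq:reward_param}. The only remaining task is therefore to verify that $r'$ and $r$ induce the \emph{same} preference distribution, not merely the same optimal policy.

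First I would sum $r'$ along an arbitrary trajectory $\tau = \{\bs_0, \ba_0, \ldots, \bs_T\}$ terminating at $\bs_T$ and exploit the telescoping of the potential terms: $\sum_{t=0}^{T-1} (V^*(\bs_{t+1}) - V^*(\bs_t)) = V^*(\bs_T) - V^*(\bs_0) = -V^*(\bs_0)$, where the last equality invokes the theorem's hypothesis that $V^*$ vanishes at terminal states. Writing $R(\tau) = \sum_{t=0}^{T-1} r(\bs_t, \ba_t)$ for the trajectory return, this gives $\sum_{t=0}^{T-1} r'(\bs_t, \ba_t) = R(\tau) - V^*(\bs_0)$; that is, the reshaped return equals the original return shifted by the constant $-V^*(\bs_0)$, which depends only on the shared initial state and not on the remainder of $\tau$.

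Next I would push this constant shift through the preference model. Because every trajectory in \cref{eq:dense-bradley-terry} starts from the same initial state $\bs_0$, each return acquires the identical additive constant $-V^*(\bs_0)$. In the Bradley-Terry expression this contributes a common factor $e^{-V^*(\bs_0)}$ to every exponential, which cancels between numerator and denominator; in the more general Plackett-Luce ranking likelihood the same factor cancels inside each ratio $\exp(R(\tau_k))/\sum_{j\ge k}\exp(R(\tau_j))$. Hence $r'$ and $r$ yield identical preference probabilities, and since $r$ was an arbitrary reward consistent with the model, every such reward class admits a representative of the form \cref{eq:reward_param} with $\pi = \pi^*$.

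The main obstacle, and the point where the argument goes strictly beyond \citet{ng1999policy}, is exactly this final cancellation: Ng et al.\ only guarantee that potential shaping preserves the optimal policy, whereas here I must show it preserves the entire preference distribution. What makes this work is the conjunction of the shared-initial-state assumption with the terminal condition $V^*(\bs_T)=0$, which together collapse the telescoped potential to a single trajectory-independent constant -- precisely the structure that the Bradley-Terry and Plackett-Luce normalizations annihilate. I would flag the well-definedness of $\pi^*$ and $V^*$ (guaranteed by \cref{lemma:r_to_q} and the normalization in \cref{eq:value}) as the only other hypothesis that must be checked for the construction to be valid.
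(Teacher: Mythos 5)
Your proposal is correct and follows essentially the same route as the paper: the paper's proof likewise invokes the shaping identity in \cref{eq:advantage} (with $\Phi = V^*$) for policy invariance and then appeals to the telescoping/cancellation argument used to derive \cref{eq:policy_pref} --- exactly the substitution, telescoping via $V^*(\bs_T)=0$, and shared-initial-state cancellation of $V^*(\bs_0)$ in the Bradley-Terry and Plackett-Luce normalizations that you carry out explicitly. Your write-up is simply a more detailed execution of the paper's sketch, including the worthwhile flag that well-definedness of $\pi^*$ and $V^*$ rests on \cref{lemma:r_to_q} and \cref{eq:value}.
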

\begin{proof}







Above we derived the invariance of the optimal policy under the re-parameterization. The preference model can be shown to be invariant by substituting and following the same steps used to arrive at \cref{eq:policy_pref} in the last section, or by following Definition 1 from \citet{watson2023coherent}. \looseness=-1
\end{proof}
Interestingly, in practice, the potential function $\Phi(\bs_t)$ represents the free parameter in the logits of the language model. An equal shift along all logits yields the same policy, but different Q-functions and corresponding rewards. The above Theorem proves that all of these are in the same equivalence class and induce the same set of preferences. 

Moreover, this Theorem implies that we can use DPO to learn the optimal policy for any per-token reward function, provided preference queries start at the same state and end at a terminal state. In addition, DPO \emph{always} fits an optimal advantage function for \emph{some} reward which is responsible for credit assignment. Thus, the training data determines how close the learned advantage corresponds to that of the true reward. This is in contrast to methods that estimate the reward function and then additionally employ some policy improvement mechanism. Which algorithm performs better remains largely an open or empirical question. 

The above derivations cast a language model as a Q function in the discrete token-level MDP. While this interpretation does not generally hold in continuous spaces, we can extend many of our results to other specially structured MDPs, like those present in diffusion. See Appendix \ref{appendix:diffusion} for more thorough treatment.





    
\section{Practical Insights}

In this section we discuss the empirical implications of our theoretical analysis. First, we qualitatively show that DPO can learn per-token credit assignment. Next, we use the derivations of the prior section to connect guided decoding and search-based algorithms, such as MCTS, to likelihood-based search on the DPO policy and empirically validate these results. Finally, (for the first time), we mathematically explain the phenomenon of decreasing likelihoods during DPO training, observed in the research and industry community.

For all empirical evaluations we use the Pythia 2.8B model \cite{biderman2023pythia} and the Reddit TL;DR summarization dataset \cite{stiennon2022learning}. We use the default hyper-parameters from the original public DPO implementation, unless otherwise stated. 

\subsection{Does DPO Learn Credit Assignment?}
\begin{figure}
    \centering
    \includegraphics[width=0.495\textwidth]{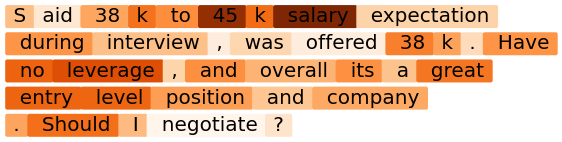}
    \includegraphics[width=0.495\textwidth]{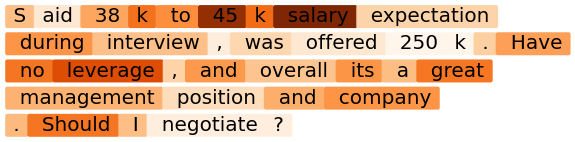}
    \caption{Credit assignment in DPO based on answer-level feedback. We provide two summaries to a Reddit post about a job interview. The left is the base response and on the right we have introduced errors in the salary range and the position level. Each token is colored corresponding to the DPO implicit reward as expressed in Eq. \ref{eq:reward_param} (darker is higher), using the trained model. We see that the model correctly highlights the erroneous statements, without much change to the value of the other tokens, which indicates the ability to do credit assignment.}
    \label{fig:credit assignment}
\end{figure}

In the previous section we outlined how the trained DPO policy represents an optimal Q-function for some reward that optimizes the preference equation. In this section, we evaluate qualitatively if the DPO-trained model is able to learn credit assignment from trajectory feedback. We begin with a generic set of Reddit posts for the TL;DR test dataset, which we provide in Appendix \ref{appendix:reddit_post} with additional examples. In our representative example the user discusses an employment negotiations situation. Two answers are shown in Figure \ref{fig:credit assignment}. The base summary, which is correct is provided on the left. On the right we modify the summary by introducing a higher-level position and a corresponding higher salary. For each token in both answers we compute the DPO reward (equivalently the advantage function or ``coherent'' reward \citep{watson2023coherent}),  $r(\mathbf{s}, \mathbf{a}) = \beta \log \pi_{\theta}(\mathbf{s}| \mathbf{a}) - \beta \log \piref(\mathbf{s}| \mathbf{a})$, where $\pi_{\theta}$ as outlined in Theorem  \ref{theorem:equiv} (here $\pi_{\theta}$ is our DPO-trained model and $\piref$ is the SFT model). In Figure \ref{fig:credit assignment} each token is colored proportionally to this reward. We see that the model successfully identifies the tokens corresponding to the erroneous statements, while still maintaining comparable values for the rest, which is indicates that it can do credit assignment. Moreover, we see that within the context of the first error ("250K" salary) the model still allocates reasonable values to the rest of the tokens and specifically identifies the second error "management position". This is a promising sign of the ability to do "stitching" \cite{levine2020offline} i.e. a form of combinatorial generalization from offline data.  If this is the case, our findings could be significant for the use of reinforcement learning and RLHF in LLMs, particularly for compositional tasks, such as code and reasoning. At the same time, in the recently introduced RewardBench \cite{lambert2024rewardbench}, DPO models have demonstrated strong performance as classifiers on reasoning tasks. We believe these are encouraging results, which warrant further large-scale study beyond our qualitative observations.

\subsection{Connecting Guided Decoding and Search to Likelihood-Based DPO Optimization}
Recently Large Language Models have been combined with search algorithms during the inference stage \cite{mudgal2024controlled, feng2024alphazerolike, huang2024deal,  liu2023dont}, which have found to improve the quality of responses over standard next token decoding. Following the standard literature, these methods rely on a (usually sparse) reward signal or model $r_{\theta}(\mathbf{s_t}, \mathbf{a}_t)$ which they use to train a separate value function $V_{\theta}(\mathbf{s}_t)$. During inference time they deploy a graph-search algorithm in the token MDP as outlined in Section \ref{section:tokenMDP} to maximize the sum of rewards. Let us consider the search problem outlined in Eq. \ref{eq:multi_step_RL} with a partial expansion of length $K$: 
\begin{equation}\label{eq:likelihood_search}
    \max_{\ba_0, \ldots, \ba_K}r(\mathbf{s}_0, \mathbf{a}_0) + \beta\log\piref(\mathbf{s}_0, \mathbf{a}_0) + \ldots + r(\mathbf{s}_t, \mathbf{a}_t) + \beta\log\piref(\mathbf{s}_K, \mathbf{a}_K) + V^*(\mathbf{s}_{K+1})
\end{equation}
where $V^*$ is the optimal corresponding value function. Now, if we directly substitute the reward representation from Eq. \ref{eq:advantage} into the above and considering a telescoping sum, with some standard algebra, we obtain that the above objective is equivalent to
\begin{equation}
   \max_{\ba_0, \ldots, \ba_K} -V^*(\mathbf{s}_0) + \beta \log\pi^*(\mathbf{a}_0|\mathbf{s}_0) + \ldots + \beta \log\pi^*(\mathbf{a}_K|\mathbf{s}_K) 
\end{equation}
where $\pi^*$ is the corresponding optimal policy. Now, since the starting state is fixed (it's given by the prompt) we have that a search algorithm based on the conservative reward function of the RLHF objective and the corresponding optimal value policy is equivalent to likelihood search on the corresponding optimal policy. 
We empirically verify this property in Fig. \ref{fig:likelihood_plot}, which shows the win rate of DPO models trained with three different $\beta$ values against the preferred summary in the test dataset. We see that a 5-beam search improves win-rates by 10-15\% over the base policy (1-beam), which is comparable to the value-function guided search improvements reported in \cite{mudgal2024controlled}. Interestingly, we see performance degrade with higher number of beams. Increasing the number of beams also produces answer with exploding length, which is a sign of reward over-optimization \cite{gao2022scaling, park2024disentangling, rafailov2024scalinglawsrewardmodel} and would explain the degradation in performance. These observations are consistent with out formulation of beam search as a search over a learned reward function.


These findings are consistent with the result of the recently proposed V-STaR algorithm \cite{hosseini2024vstar}, which combines the approach of STaR \cite{zelikman2022star} with a DPO trained verifer. At inference time, the STaR model produces several candidate reasoning chains (plans) which are ranked by the DPO verifier likelihood. This can be seen as a form of likelihood based search as in Eq. \ref{eq:likelihood_search}, however instead of directly searching on the DPO model, it uses the STaR model as a proposal distribution. We hypothesize this is beneficial in preventing reward hacking, which is potentially an issue with deeper search as shown in Fig. \ref{fig:likelihood_plot}.

\begin{figure}
    \centering
    \includegraphics[width=0.475\textwidth]{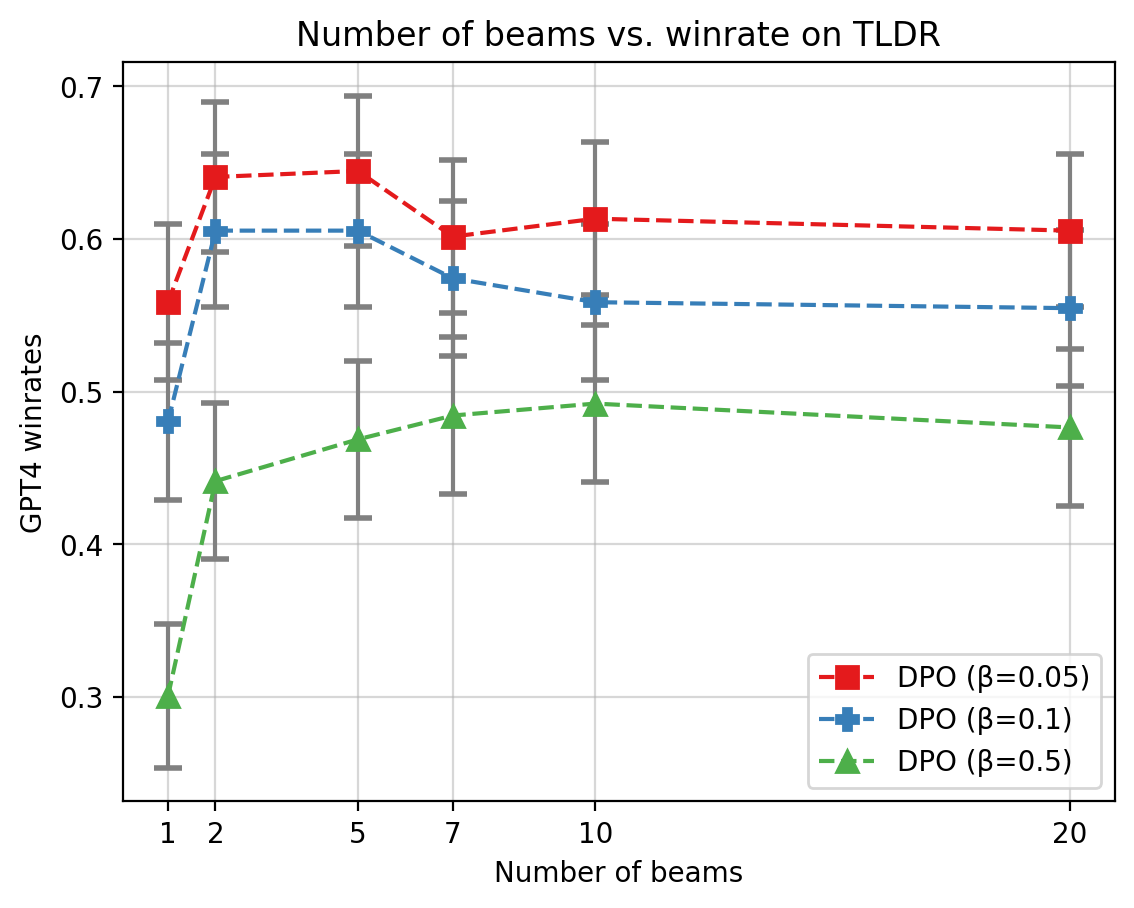}
    \includegraphics[width=0.475\textwidth]{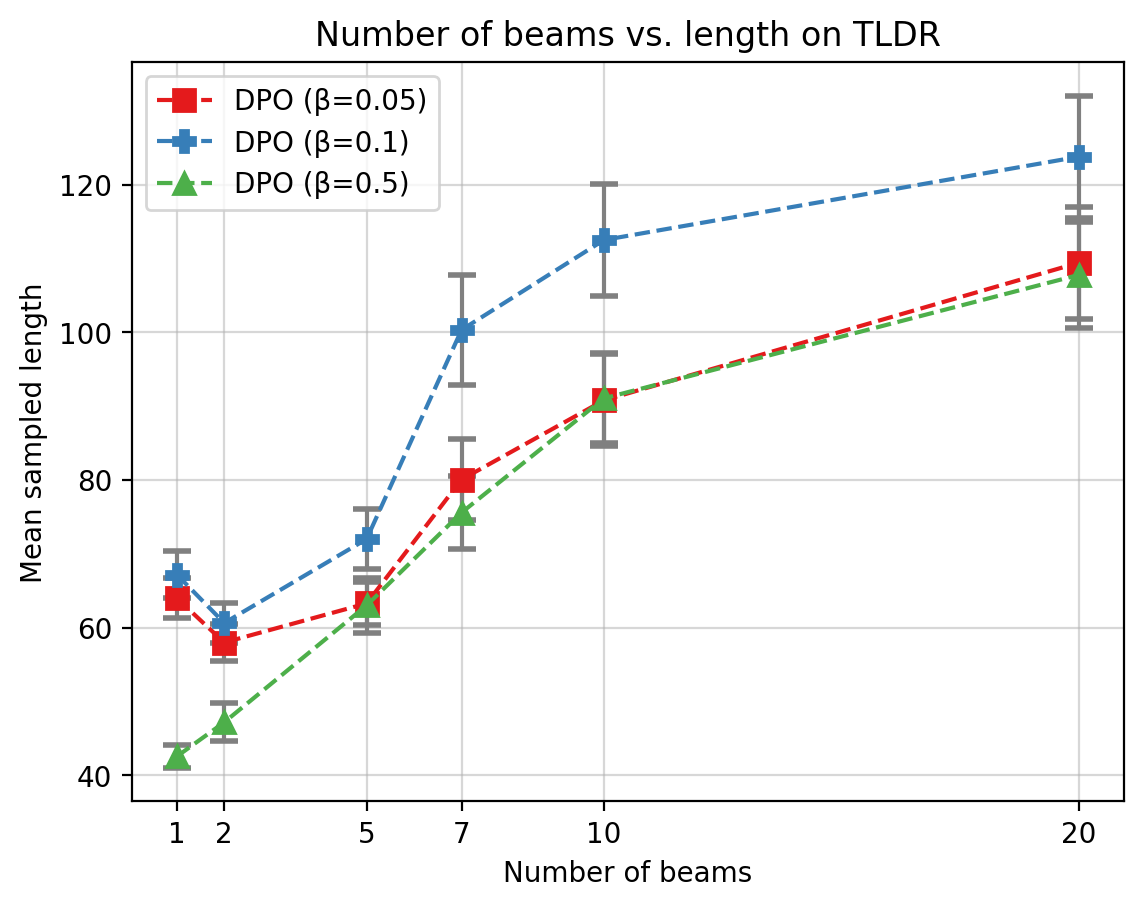}
    \caption{Model performance using beam search. \textbf{Left}: Win rate of the model generated summaries over the preferred summary on 256 held-out test prompts from the Reddit TL;DR dataset, as evaluated by GPT 4. \textbf{Right}: The average answer length based on number of beams. We see exploding verbosity with more than 5 beams, which also leads to lower model win rates, despite GPT4's well-know preference length bias.}
    \label{fig:likelihood_plot}
\end{figure}

\subsection{Connections Between Proxy Tuning and Reinforcement Learning}
Several recent works \cite{mitchell2023emulatorfinetuninglargelanguage, liu2024tuninglanguagemodelsproxy, liu2024decodingtimerealignmentlanguagemodels} have proposed an approach of inference-time model alignment through a proxy guidance model. These approaches start with a (unaligned) base model $\pi_{\text{base}}$ and a proxy model $\pi_{\text{proxy}}$ and a target distribution reference model $\pi_{\text{ref}}$. The inference time re-alignment of the base model is carried by re-weighting the conditional probabilities of each token:
\begin{equation}
    \pi(\mathbf{a}|\mathbf{s}_t)\propto \pi_{\text{base}}(\mathbf{a}|\mathbf{s}_t)\left(\frac{\pi_{\text{proxy}}(\mathbf{a}|\mathbf{s}_t)}{\pi_{\text{ref}}(\mathbf{a}|\mathbf{s}_t)}\right)^{\beta}
\end{equation}

Under our considerations from the prior chapter, then this becomes equivalent to 
\begin{equation}
    \pi(\mathbf{a}|\mathbf{s}_t)\propto \pi_{\text{base}}(\mathbf{a}|\mathbf{s}_t)\exp(\beta(Q^*(\mathbf{s}_t, \mathbf{a})-V^*(\mathbf{s}_t)))
\end{equation}

where $\beta(Q^*(\mathbf{s}_t, \mathbf{a})-V^*(\mathbf{s}_t)$ is the optimal implicit advantage from the proxy tuning model. That is our theoretical results allows us to tie the realignment approaches of \cite{mitchell2023emulatorfinetuninglargelanguage, liu2024tuninglanguagemodelsproxy, liu2024decodingtimerealignmentlanguagemodels} to recent works which explicitly train critic models \cite{mudgal2024controlled} for token-level decoding.

\subsection{Likelihoods should decrease when using DPO.}
A surface level interpretation of DPO would lead one to believe it increases the likelihood of chosen responses, while decreasing the likelihood of rejected responses. This however, does not account for a well observed phenomena in which the likelihood of the chosen responses actually \textit{decrease} over time \citep{pal2024smaug}. This is illustrated on the left half of \cref{fig:phenomena}, which we show that when performing SFT before DPO, the implicit rewards of both the chosen and rejected response decline, though the margin between them increases. However, given a MaxEnt RL framing, this phenomena may be expected. 

Consider the expected log ratio (or implicit reward) of a policy under the reference model, which is often measured during training. Algebraic manipulation yields the following relationship:
\begin{equation}
\label{eq:logratio}
    \E_{\ba\sim \piref(\cdot|\bs)}\left[\beta \log \frac{\pi(\ba|\bs)}{\piref(\ba|\bs)}\right] = - \beta \kl \left(\piref(\cdot | \bs) || \pi(\cdot | \bs) \right)
\end{equation}
At the beginning of training when $\pi = \piref$, the implicit rewards are trivially zero. However at the end of training, assuming $\piref \ne \pi^*$, the KL-divergence is necessarily positive, indicating that the implicit rewards must decrease in expectation to converge. This means that the average implicit rewards \textit{should} go down when starting from the SFT model.  In fact, on the left side of \cref{fig:phenomena} we show that when one \emph{does not} SFT before DPO, there is little discernible trend in the average implicit reward and the implicit rewards of the chosen responses remain above zero. In fact, this trend also holds for CPL \citet{hejna2024contrastive} for the general MDP, where the implicit rewards actually increase if SFT is not used.

One might realize that the previous analysis does not necessitate that the implicit rewards of the chosen must decrease, just that the implicit rewards must decrease on average. However, in practice it is common place (and recommended by \citet{rafailov2023direct}) to SFT on only the chosen responses to form $\piref$. For this section only we will call this choice of reference $\piref^w$. Substituting $\piref^w$ into \cref{eq:logratio}, we can see that when SFTing on the positive answers the implicit rewards of the chosen responses \emph{must} go down because at convergence as  $\E_{\piref^w }[\beta \log \pi^* - \beta \log \piref^w] = - \beta\kl(\piref^w || \pi^*)$.  

\textbf{Based on this derivation and choice of $\piref^w$, the likelihood of the chosen response should decrease in the process of DPO training.} 


While choosing $\piref = \piref^w$ is done in practice \citep{rafailov2023direct}, it does mean that DPO will decrease the likelihood of all data in favor of extrapolated responses, which could cause over-fitting. Moreover, now that we have provided a derivation of DPO in the token-level MDP, one might expect it to exhibit characteristics like an RL algorithm -- namely that the implied $Q$-function monotonically increases over time. However, this is not necesarily the case. Note that per analysis in Section \cref{section:tokenMDP}, DPO can be viewed as adjusting the reward (or advantage) from which the optimal policy is deterministically mapped within the token-level MDP. DPO does not train a policy to maximize reward, and thus we do not argue about whether its implied value functions should increase or decrease over time.

\begin{figure}
    \centering
    \includegraphics[width=0.45\textwidth]{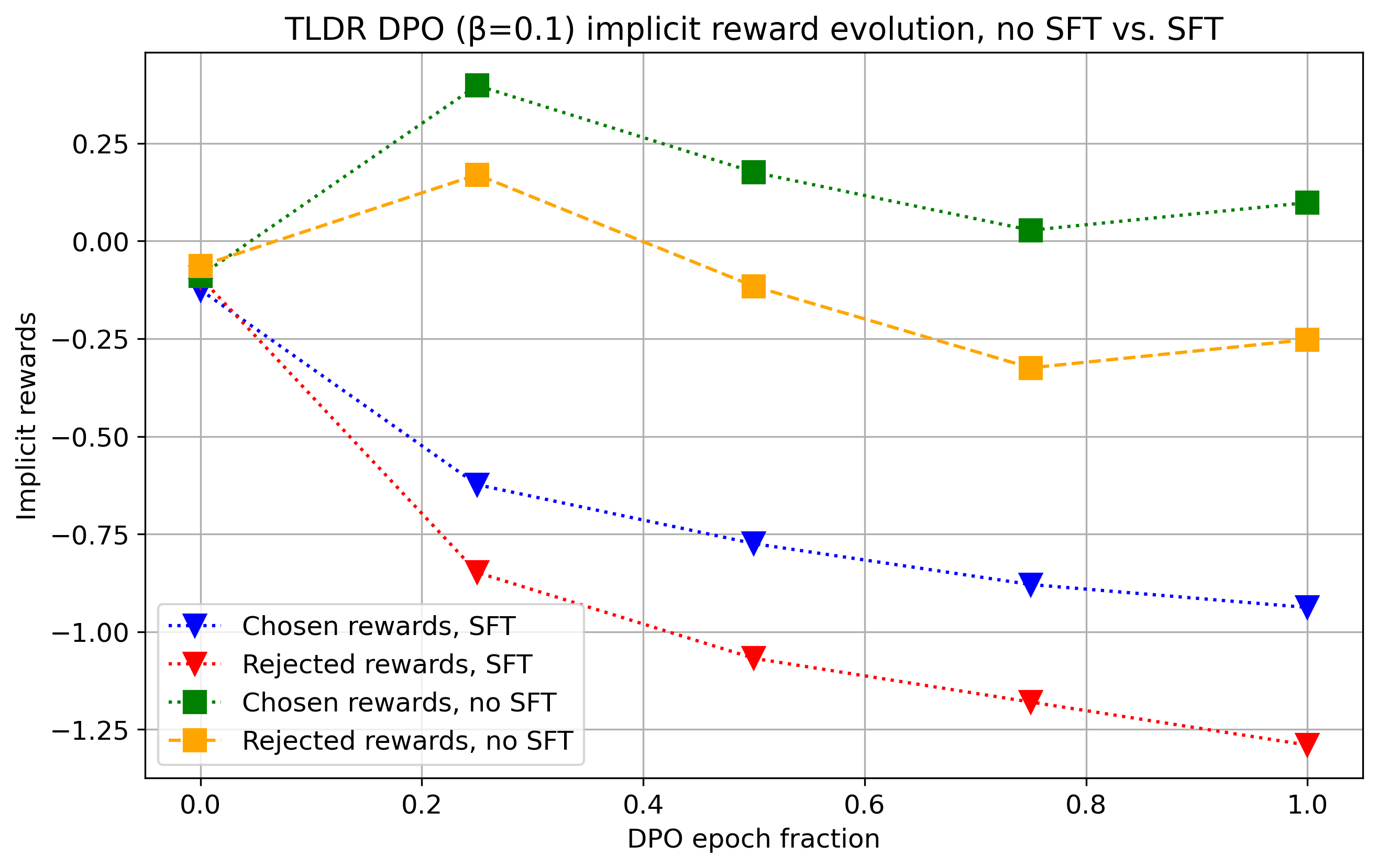}
    \includegraphics[width=0.45\textwidth]{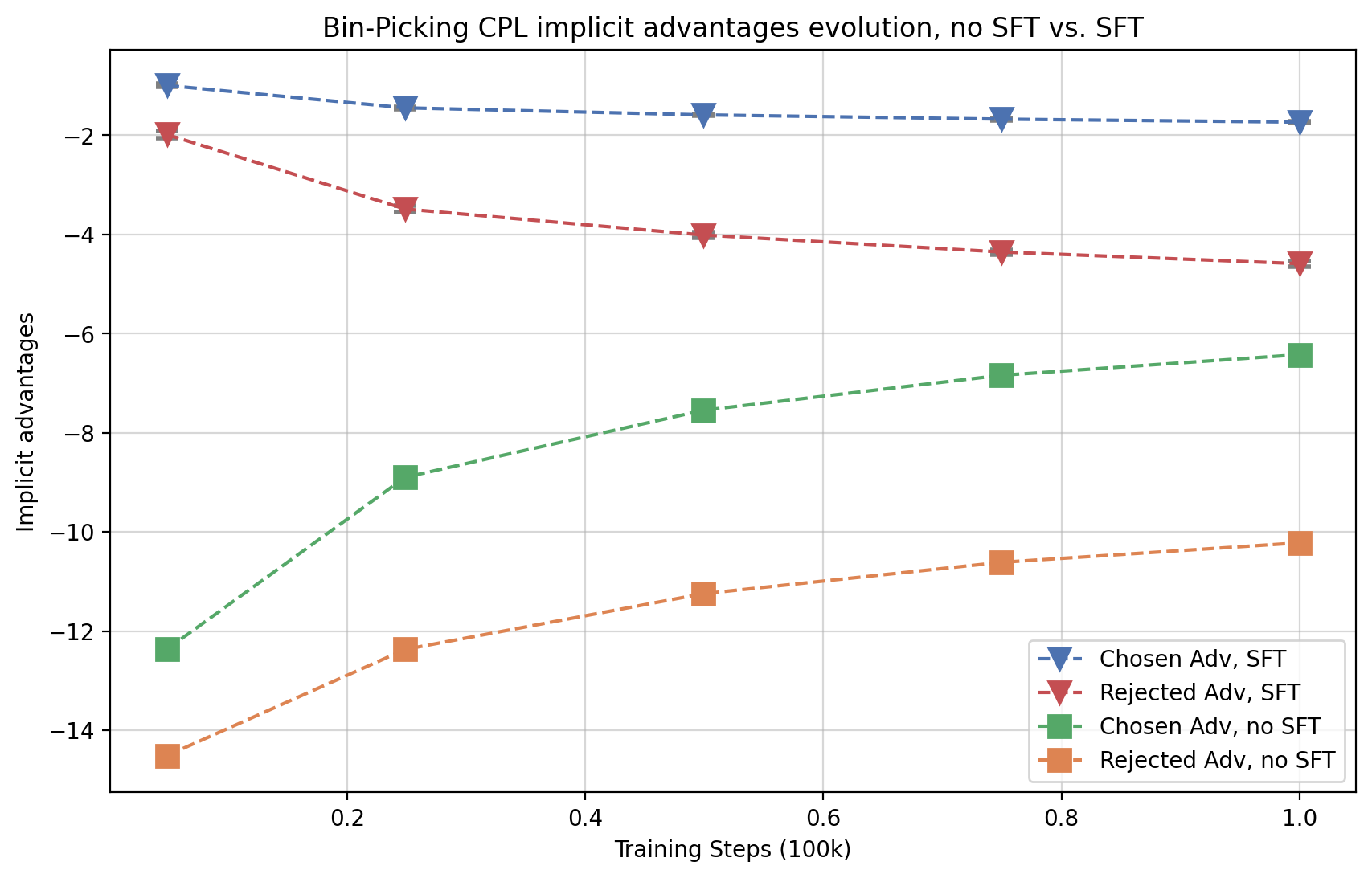}
    \caption{The evolution of implicit rewards for DPO on TLDR (left) and CPL on the bin-picking dataset (right) during training. We see that when we start with SFT, reward values decrease, whereas starting without SFT causes implicit rewards to be positive for DPO and increase for CPL. \looseness=-1}
    \label{fig:phenomena}
\end{figure}


\section{Discussion}
In this work we formulated the DPO optimization algorithm as learning an optimal Q-function, which is represented by an LLM. This formulation and our results provide theoretical justification for empirically observed DPO training phenomena, which are not explained by the original bandit formulation. We further link and unify a family of proposed new LLM search algorithms by likelihood search under DPO and show comparable empirical gains by a simple 1-line code change to using beam search. Most importantly, we show qualitative early signs that DPO is able to learn credit assignment directly from feedback data. While larger-scale empirical exploration is necessary, we believe this an encouraging early sign. Our results indicate a number of promising future directions to explore:

\noindent\textbf{Learning intermediate reasoning from outcome feedback:} Recent works have shown promising results on that front \cite{pang2024iterativereasoningpreferenceoptimization, hwang2024selfexploreavoidpitimproving}.

\noindent\textbf{Multi-turn conversations:} Teaching language models to be an interactive conversationalists has been difficult, as RLHF is optimized as a single-turn bandit formulation. Moreover, classical methods, such as PPO, are not applicable in this setting. Recent work by \citet{andukuri2024stargate} has shown success in this domain using STaR and extending DPO to multi-turn conversational trees is a promising direction. 

\noindent\textbf{Agentic LLMs:} LLM agents, such as WebGPT \citep{nakano2022webgpt} are equipped to take autonomous actions, such as browsing the Web and collecting information before providing an answer. The user then provides feedback based on the final output. Our derivations indicate that DPO training (on the full model trajectories) could learn optimal exploration behaviour. Recent works \cite{song2024trial, xi2024agentgymevolvinglargelanguage} shows promise in that direction.

\noindent\textbf{End-to-end training of generative AI systems:} Modern image generation systems, such as Dalle 3 \cite{dalleg2023betker} use an LLM to produce high quality conditioning before calling a diffusion generation model. Also, recent long-form video generation models \cite{hu2023gaia1, gupta2023photorealistic} combine transformer-based auto-regressive generations with a diffusion-based decoder. Such systems could potentially be optimized end-to-end with ahybrid version of DPO. We expand on these points in the Appendix.

We believe these are promising directions for future work. 
\newpage
\subsection*{Acknowledgements}

Chelsea Finn is a CIFAR Fellow in the Learning in Machines and Brains program. JH is supported by an NDSEG Fellowship. This work was also supported by ONR grant N00014-22-1-2621 and the Volkswagen Group.



\bibliography{colm2024_conference}
\bibliographystyle{colm2024_conference}

\appendix

\newpage
\section{Proof of Lemma 1}

\setcounter{lemma}{0}
\begin{lemma}
For a fixed policy $\pi$, there is a bijection between reward functions $r$ and corresponding optimal $Q$-functions ($Q^*$) in the deterministic tree-structured LLM MDP.
\end{lemma}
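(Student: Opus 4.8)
The plan is to exhibit the bijection explicitly, by constructing the forward map $r \mapsto Q^*$ and its inverse $Q^* \mapsto r$ directly from the Bellman recursion in \cref{eq:critic}, and then checking the two are mutual inverses. The reason this needs care — and the reason the earlier constructions of \citet{garg2022iqlearn} and \citet{hejna2024inverse} assumed $\gamma < 1$ — is that without a discount factor the Bellman operator is no longer a contraction, so one cannot invoke a Banach fixed-point argument for the existence and uniqueness of $Q^*$. My first step is therefore to record the structural assumptions that take the place of the contraction, with the reference policy $\piref$ held fixed throughout: the dynamics $f(\bs,\ba) = \bs|\ba$ are deterministic, so every reachable non-initial state has a unique parent and the reachable state space is a tree rooted at the prompt; and every trajectory terminates at an \textbf{EOS} token within a bounded number of steps, so this tree has finite depth.

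Given this, I would define the forward map by backward induction on tree depth. At the deepest level every transition is terminal, so I set $V^*(\bs) = 0$ at terminal states and read off $Q^*(\bs_t, \ba_t) = r(\bs_t, \ba_t) + \beta \log \piref(\ba_t|\bs_t)$ from the terminal branch of \cref{eq:critic}. For the inductive step at a non-terminal state $\bs_{t+1}$, all of its children lie deeper in the tree, so $Q^*(\bs_{t+1}, \cdot)$ is already determined; I then compute $V^*(\bs_{t+1}) = \beta \log \sum_{\ba \in \mathcal{A}} e^{Q^*(\bs_{t+1}, \ba)/\beta}$ via \cref{eq:value} and finally $Q^*(\bs_t, \ba_t) = r(\bs_t, \ba_t) + \beta \log \piref(\ba_t|\bs_t) + V^*(\bs_{t+1})$. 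Because each value is produced exactly once and is fully determined by previously computed quantities, this defines $Q^*$ uniquely, which makes the forward map well-defined.

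For the inverse direction I would simply rearrange \cref{eq:critic}: given any candidate $Q^*$, define $V^*$ from it by the same log-sum-exp in \cref{eq:value} (with $V^* = 0$ at terminal states), and then set $r(\bs_t, \ba_t) = Q^*(\bs_t, \ba_t) - \beta \log \piref(\ba_t|\bs_t) - V^*(\bs_{t+1})$, dropping the last term on terminal transitions. This shows surjectivity — every real-valued function on the edge set of the tree is the optimal soft $Q$-function of the reward it induces — and the composition in either order is the identity because $V^*$ is the same log-sum-exp of $Q^*$ in both maps and the terminal condition $V^*(\bs)=0$ anchors the induction, so the $\beta \log \piref + V^*$ terms added in one direction are subtracted in the other. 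This gives injectivity and completes the bijection.

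The main obstacle is conceptual rather than computational: it is arguing that the backward induction terminates and is self-consistent in the absence of a discount factor. The whole argument rests on the finite-depth tree structure, so I would be careful to state the termination assumption precisely (every sequence reaches \textbf{EOS}, equivalently the MDP has a bounded horizon) and to emphasize that it is exactly this structure, together with the terminal condition $V^*(\bs) = 0$, that plays the role the contraction played in the discounted setting. A secondary point worth verifying is that the free additive constant in the logits (noted after \cref{eq:value}) does not break the claim: shifting all logits at a state changes $Q^*$ and hence $r$, so distinct shifted $Q^*$'s map to distinct rewards, consistent with injectivity, and the many-to-one collapse happens only in the downstream map $Q^* \mapsto \pi^*$, not in the bijection $r \leftrightarrow Q^*$ itself.
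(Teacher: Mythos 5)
Your proposal is correct and follows essentially the same route as the paper's proof: both exploit the deterministic finite-depth tree structure and the terminal condition $V^*(\bs)=0$ to solve the soft Bellman recursion backward from the leaves, and both obtain surjectivity by rearranging \cref{eq:critic} into $r(\bs_t,\ba_t)=Q^*(\bs_t,\ba_t)-\beta\log\piref(\ba_t|\bs_t)-V^*(\bs_{t+1})$. The only difference is bookkeeping: the paper proves injectivity by contradiction, locating the \emph{first} state-action pair at which $r$ and $r'$ differ when scanning back from a leaf and showing the downstream values still agree, whereas you obtain it by verifying that your explicit inverse map composes to the identity --- the same backward-induction ingredient in a slightly tidier package (and you retain the $\beta\log\piref$ term that the paper's appendix statement of the inverse elides).
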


\textit{Proof.} Let $Q^*_r$ denote the optimal $Q$-function for reward $r$. We prove the statement directly, starting with the injective case.

Assume there exists a reward function $r' \ne r$ such that $Q^*_{r'} = Q^*_{r}$. Then, there must exist a state action pair such that $r'(\bs_t,\ba_t) \ne r(\bs_t,\ba_t)$. In fact, proceeding backwards from a leaf node (terminal state), there must be a \textit{first} state action pair $(\bs_t,\ba_t)$ where $r'(\bs_t,\ba_t) \ne r(\bs_t,\ba_t)$. 
The $Q$ functions at this location are 
\begin{equation*}
    Q^*_{r'}(\bs_t,\ba_t) = r'(\bs_t,\ba_t) + V_{r'}^*(\bs_{t+1}), \quad Q^*_{r}(\bs_t,\ba_t) = r(\bs_t,\ba_t) + V_{r}^*(\bs_{t+1})
\end{equation*}
By the fact that this was the \textit{first} location where the reward functions differed starting from a leaf node, we must have that $V_{r'}^*(\bs_{t+1}) = V_{r}^*(\bs_{t+1})$. This is because we can recursively solve for the optimal policy, value, and Q-function using \cref{eq:policy} \cref{eq:critic}, and \cref{eq:value} from \citet{ziebart2008maximum}. The rewards in all possible future states from $s,a$ are equal by virtue of this being the location of the first difference and thus the dynamic programming solution up to this point is the same. Thus, we can see that $Q^*_{r'}(\bs_t,\ba_t) \ne Q^*_{r}(\bs_t,\ba_t)$, completing this direction. Note that this proof does not hold in general MDPs, only the token MDP where it is impossible to return to the same state after taking any number of actions.

The surjective direction is easier. For all $Q^*$, we can compute a reward function $r(\bs_t,\ba_t) = Q^*(\bs_t,\ba_t) - V^*(\bs_{t+1})$ under deterministic dynamics. Thus, we can see that the mapping is surjective. 

\newpage
\section{Treatment of Diffusion Models}\label{appendix:diffusion}
Conditional diffusion image generation models, such as Stable Diffusion 3 \cite{esser2024scaling} have also used a form of the DPO algorithm as outlined in \cite{wallace2023diffusion}. Our analysis can no longer be directly applied in that setting, since the generations are continuous. However, we could translate many of our results to that setting, if we consider a certain diffusion MDP. We outline our results bellow.
\subsection{Diffusion MDP}
We borrow the denoising MDP formulation from \cite{ddpo, dpok}. We again have the tuple
$(\mathcal{S}, \mathcal{A}, f, r, \rho_0)$, with the same formulation as in Section \ref{section:tokenMDP}. At the same time consider a diffusion process with time index $t$ and $T$ total steps, conditioned on context $\mathbf{c}$ and image denoted by $\mathbf{x}_t$. Then we can map the diffusion generation process to an MDP in the following way
$$\mathbf{s}_t =   \begin{cases}
      (\vc, T) & \text{if $t=0$}\\
      (\vc, \x_{T-t}, T-t) & \text{otherwise}\\
    \end{cases}
$$

That is the initial state consists of the prompt $\mathbf{c}$ and afterwards each state consists of the current denoised image $\mathbf{x}_t$ and time step $T-t$. Notice that the time-steps in the MDP are inverse to the direction of the diffusion process (i.e. we start at noise and end at the final image). The action is just the next image iteration, from where the dynamics is also straightforward:

$$\mathbf{a}_t \triangleq\x_{T-t+1}$$
$$f(\mathbf{s}_t=(\vc, \x_{T-t}, T-t), \mathbf{a}_t) = (\mathbf{c}, \mathbf{a_t}, T-t-1)$$

Notice that in this case the policy is \textbf{stochastic}, but the dynamics of the MDP is still \textbf{deterministic}. Finally, the initial distribution, is just the distribution of prompts:

$$\rho(\mathbf{s}_0)\triangleq (p(\vc), 0)$$

\subsection{Theoretical Results for the Diffusion MDP}
Given the above formulation, we can also prove that Lemma \ref{lemma:r_to_q} also holds in the diffusion MDP.

\begin{lemma} \label{lemma:r_to_q_diffusion} Under mild assumptions, there is a bijection between reward functions $r(\bs_t, \ba_t)$ and corresponding optimal Q-functions $Q^*(\bs_t, \ba_t)$ in the diffusion MDP.
\end{lemma}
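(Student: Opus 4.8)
The plan is to follow the proof of \cref{lemma:r_to_q} essentially verbatim, since the single structural property that made that argument work is preserved by the diffusion MDP. Because the MDP time index $T-t$ is baked into every state $\bs_t = (\vc, \x_{T-t}, T-t)$ and the dynamics $f$ decrement it deterministically, no state can ever be revisited: every trajectory is a path of fixed length $T$ through a (now continuously branching) tree rooted at $\rho_0$. This is exactly the condition flagged at the end of \cref{lemma:r_to_q} as the reason the bijection fails in general MDPs, and it holds here. The only genuine change relative to the token MDP is that the discrete vocabulary $\mathcal{A}$ is replaced by a continuous action space, so the soft value recursion \cref{eq:value} becomes a log-integral
\begin{equation*}
    V^*(\bs_t) = \beta \log \int_{\mathcal{A}} e^{Q^*(\bs_t, \ba)/\beta}\, d\ba,
\end{equation*}
which I would assume to be finite (the partition function converges), while the Bellman relation \cref{eq:critic} continues to hold with the deterministic successor $\bs_{t+1} = f(\bs_t, \ba_t)$.

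For surjectivity I would argue exactly as in the discrete case: given any candidate $Q^*$, define $V^*$ by the log-integral above and set $r(\bs_t, \ba_t) = Q^*(\bs_t, \ba_t) - V^*(\bs_{t+1})$. Determinism makes $\bs_{t+1}$ a well-defined function of $(\bs_t, \ba_t)$, so $r$ is well defined and by construction induces $Q^*$ as its optimal soft $Q$-function. For injectivity I would take two rewards $r' \ne r$ with $Q^*_{r'} = Q^*_r$ and run backward induction over the finitely many time-layers $t = T, T-1, \ldots, 0$: let $t^\star$ be the latest layer (closest to a terminal image) at which the rewards disagree. For every continuation beyond $t^\star$ the two rewards coincide, so solving \cref{eq:policy,eq:value,eq:critic} backward from the terminal layer yields identical optimal values there, $V^*_{r'}(\bs_{t^\star+1}) = V^*_r(\bs_{t^\star+1})$. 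Substituting into the Bellman equation then forces $Q^*_{r'}(\bs_{t^\star}, \ba_{t^\star}) \ne Q^*_r(\bs_{t^\star}, \ba_{t^\star})$, contradicting equality of the $Q$-functions.

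The step I expect to be the main obstacle is making the ``latest layer of disagreement'' argument rigorous in the continuous setting. Because $V^*$ is now an integral, it is insensitive to changes of $r$ on measure-zero sets, so a purely pointwise difference $r' \ne r$ need not propagate to $Q^*$; the clean bijection therefore holds either modulo almost-everywhere equivalence, or under a mild regularity hypothesis (for instance continuity of the rewards, so that a pointwise disagreement entails disagreement on a set of positive measure). I would state this regularity condition explicitly as the ``mild assumption'' in the lemma, after which the backward induction goes through layer by layer: finiteness of the horizon $T$ guarantees the induction is well-founded, and within each layer the deterministic, acyclic structure lets the downstream value functions be compared directly, just as in the discrete case.
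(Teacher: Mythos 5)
Your proposal is correct and follows essentially the same route as the paper's proof: backward induction from the terminal layer $t=T$ (where $V^*(\bs_T)=0$) through the finite-horizon, deterministic diffusion MDP, with the soft value replaced by the log-integral $V^*(\bs_t)=\beta\log\int_{\mathcal{A}} e^{Q^*(\bs_t,\ba)/\beta}\,d\ba$ and the reverse direction obtained by trivially inverting the Bellman equation. One small remark: your measure-zero worry is not actually an obstacle for injectivity, since $r(\bs_t,\ba_t)$ enters $Q^*(\bs_t,\ba_t)$ additively at the very same state--action pair while the downstream values coincide at the latest layer of disagreement, so even a reward disagreement confined to a null set forces a pointwise difference in $Q^*$ at that pair; the regularity hypothesis you add is therefore harmless but unnecessary (unless one insists on rewards defined only up to almost-everywhere equivalence).
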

\begin{proof}
Since the MDP still has deterministic dynamics, we have that Eq. \ref{eq:policy}-\ref{eq:critic} still hold. Now, given a reference policy $\piref$, parameter $\beta$ and a critic $Q$, we can trivially recover the unique reward function by inverting Eq. \ref{eq:critic}. We will prove that given a reward function $r(\bs_t, \ba_t)$, we can recover a unique critic $Q$. We work inductively in the diffusion MDP starting with $t=T$, where we have $V^*(\bs_T)=0$ for all terminal states. We then have that 
\begin{align}
&Q^*(\bs_{t-1}, \ba_{t-1}) = Q^*(\bs_t=(\vc, \x_{T-t+1}, T-t+1), \ba=\x_{T-t})= \nonumber\\
&r(\bs_t=(\vc, \x_{T-t+1}, T-t+1), \ba=\x_{T-t}) + \beta \log p_{ref}(\x_{T-t}| \vc, \x_{T-t+1}, T-t+1) + \nonumber\\
&\beta\log\int_{\mathcal{A}} e^{Q^*(\bs_t=(\vc, \x_{T-t}, T-t), \x_{T-t-1})/\beta}d\x_{T-t-1} \nonumber   
\end{align}

where $\pi_{ref}$ is the reference backwards diffusion process. In this case even though the state space is deterministic, our approach to the proof of Lemma \ref{lemma:r_to_q} still holds by using backwards induction on the diffusion step $t$. Notice, that from $V(\bs_T=(\vc, \x_0, 0))=0$ we can uniquely determine the critic values for all states at time step $T-1$. Proceeding inductively backwards through time in the MDP/denoising process (forward in the diffusion process), we obtain the desired result. 
\end{proof}

Given the proof of this Lemma, we can then directly apply the results of Section \ref{secttion:dpoisuniversal}, including Theorem \ref{theorem:equiv}. Our results, also give us insights into the formulation of \cite{wallace2023diffusion}. In particular, by changing the sampling scheme of the intermediate diffusion the authors obtain two alternative formulations (Appendix S2 in \cite{wallace2023diffusion}). Both of these schemes are suggested as empirical approximations in the formulation of the Diffusion-DPO algorithm, however in the view of Q-learning both of these are valid approaches to generating off-policy data. In fact, our interpretation of DPO allows for general off-policy data sampling and aggregation methods, which could yield a whole family of DPO algorithms in this domain. We leave the exploration of this direction for further work.

\newpage
\section{Reddit TL;DR Posts}\label{appendix:reddit_post}
SUBREDDIT: r/running

TITLE: Tips on getting back into running after 4 years of not doing so \& shin splints

POST: Hey everyone, I was hoping to gather some tips from people who left running and had to start over. A semi-lengthy background on myself to help you understand where I am coming from. In high school I was a very good cross country runner, running from 35-50 miles a week and never slower than 8-9 minute miles. At the end of senior year, I planned on taking a break from running and then try to race half or full marathons in the spring. I ended up not running at all after xc. 4 years later, I was noticing how much I miss the sport (especially after seeing the success of xc friends) so I decided to join a running group to get back into it. But the only group at my university that I could find was a triathlon club. I joined them, but only did the running workouts. After about 4 weeks, I developed shin splints. This is because I haven't ran in 4 years but thought 6 miles was ok after 4 weeks. Also, being 25 pounds heavier didnt help. After taking 3 months off and training on the bike and in the pool, I finally was back to running in february. but my shinsplints was still around. I finished my first sprint triathlon last week, and have been trying to get miles back under my feet again. I havent felt shin splints severely since the beginning of March, but I can feel it looming around. After a half year of it, I am getting really really frustrated. I cant run more than 4 miles still and my fastest mile is 8 minutes. I know I will probably never run like I did when I was 17, but its difficult because of remembering what I used to be capable of running.
\begin{figure}[h]
    \centering
    \includegraphics[width=0.495\textwidth]{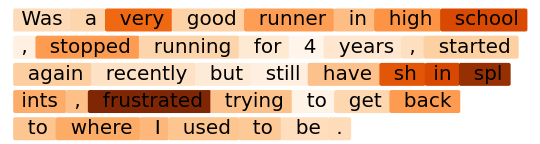}
    \includegraphics[width=0.495\textwidth]{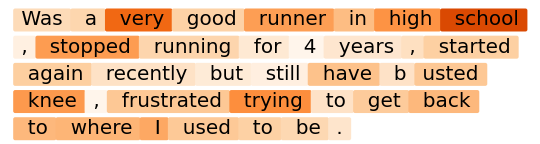}
\end{figure}

DPO successfully identifies the wrong tokens ("busted knee") in the right summary and correctly classifies this pair.

\newpage
SUBREDDIT: r/AskReddit

TITLE: What is a sub-\$800 camera that can shoot high quality video ideal for music video-like appearances?

POST: [This is a video of what we're trying to achieve.](

My school currently has a Sony HVR-HD1000u, and compared to that, our videos are nowhere near as good. I understand that things like lighting and color correction play a pretty big role, but even then I feel like our videos are never that clean. I usually can't get 720p clips out of our camera and the slow motion that they have is something we can't even come close to.

One possible *problem* is that for some reason we can't use firewire to connect the camera to the computer so we have to play the tape on this thing that basically plays it and then we capture the tape playing. I feel like this is probably a huge problem because it's like trying to show a friend a movie by screen-capping from Skype.

SO, should we scrap the HVR-HD1000u and get a Canon T2i (a cheaper DSLR which from the samples I've seen on YouTube and clips from that video, seems pretty high quality), or continue trying to use the Sony?
\begin{figure}[h]
    \centering
    \includegraphics[width=0.495\textwidth]{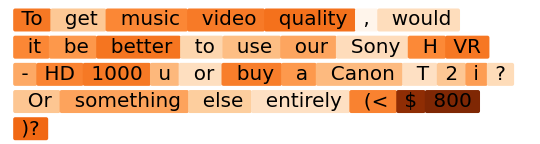}
    \includegraphics[width=0.495\textwidth]{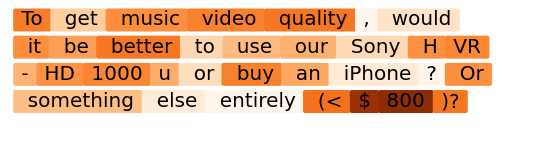}
\end{figure}

DPO successfully identifies the wrong tokens ("iPhone") in the right summary and correctly classifies this pair.

\newpage
SUBREDDIT: r/personalfinance

TITLE: When asked about salary expectations during my interview I said 38k to 45k. Was just offered the position with 38k. Should I try and negotiate?

POST: So I interviewed for a position last week, and before the interview I saw online that the industry average for this position was \$41,000. During the interview, they asked me my salary expectations, I said between \$38,000 and \$45,000 hoping it'd land somewhere in the middle. I received my offer today, and it was for \$38,000. I can't help but wonder if I had just said \$41,000 they probably would've offered it...

Anyways, so what I know is they are hiring 3 other people for this same position... I either got lucky and guessed exactly what salary they were planning on paying all of us to begin with, or we're all getting paid differently. As for the job, it is the ideal entry level position for me right now, and is a great company with benefits etc so I actually wouldn't mind working there for the 38k salary.

But it would be nice to get an even 40 at least, so my question is, is it common practice to negotiate salary after receiving an offer already? I also must say that I don't have any leverage as this is entry level and I would have probably still accepted had the offer been even as low as 30k. As such, I'm very afraid the offer may be retracted if I do try and negotiate, if that sort of thing happens?

\begin{figure}[h]
    \centering
    \includegraphics[width=0.495\textwidth]{figs/good_example_4.png}
    \includegraphics[width=0.495\textwidth]{figs/bad_example_4.png}
\end{figure}

DPO successfully identifies the wrong tokens ("250k" and "management position") in the right summary and correctly classifies this pair.

\newpage
\section{End-to-End Training of Generative AI Systems}\label{appendix:textdiffusion}
\begin{figure}
    \centering
    \includegraphics[width=\textwidth]{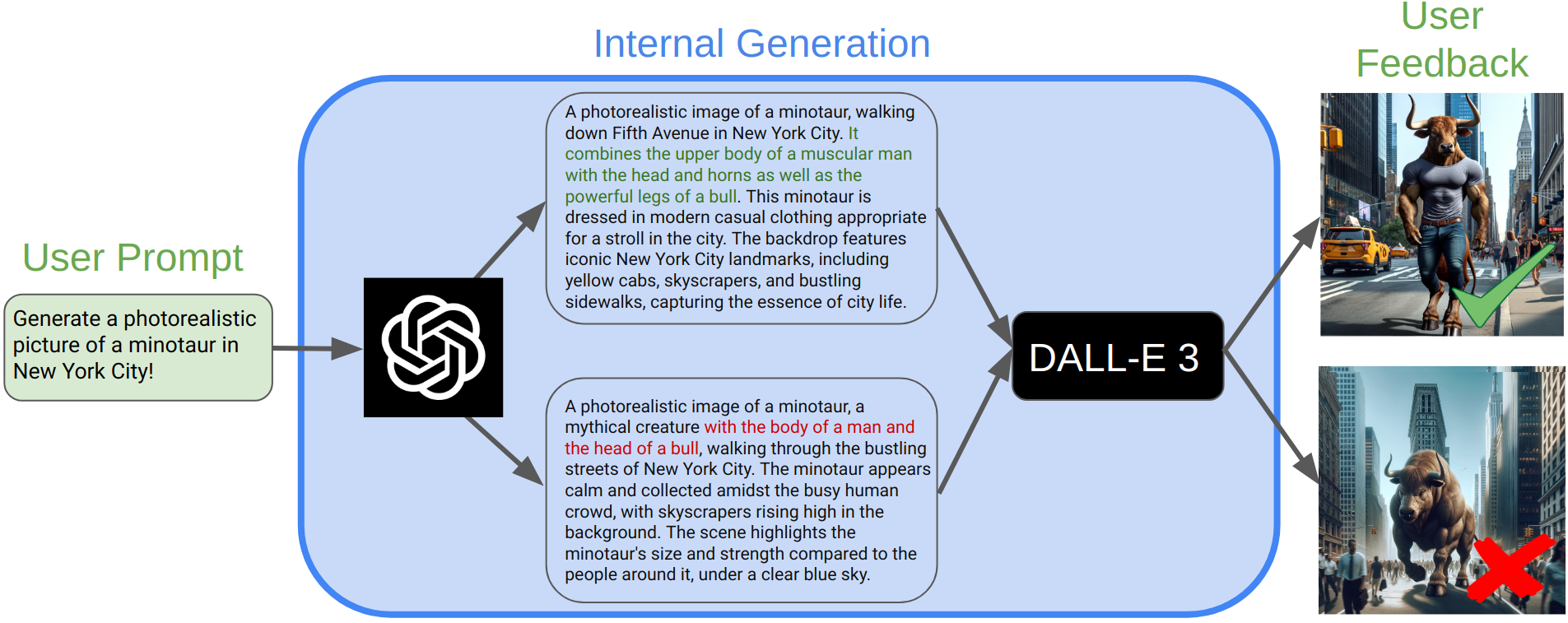}
    \caption{Example of an end-to-end generative AI workflow. The user request an image of a Minotaur in the streets of New York. However, we see that the rejected image does not actually represent a Minotaur, but a bull. The prompt refiner generates a valid description and specifically includes the phrasing "the body of a man and the head of a bull", but the image generation model fails to follow that prompt. At the same time in the case of the chosen image (which does reflect the prompt) the prompt refiner generates a more descriptive text, which the image generator is able to follow more closely. While it is possible to train each component separately, joint training can in theory, optimize the refiner to generate prompts that the image generator can more closely realize, while also training the generator to directly produce more aligned images. }
    \label{fig:minotaur}
\end{figure}

\noindent\textbf{End-to-end system:} Here, we present an outline for end-to-end training of generative AI systems from human feedback using DPO in multi-step MDP. We assume two models - a prompt refiner $\pi_{\theta}(\mathbf{z}|\mathbf{x})$, which is a language model, generating discrete tokens in an autoregressive way, which, given a prompt $\bx$ produces a refined prompt $\mathbf{z}$. This prompt is then fed into an image-generation diffusion model $\pi_{\phi}(\mathbf{y}|\mathbf{z})$, (which we parameterize as a denoising model $\epsilon_{\phi}$), which generates the final image. A real case example of that system is shown in Figure \ref{fig:minotaur}. 

\noindent\textbf{Feedback generation:} During the feedback gathering stage, a user provides a query $\mathbf{x}$ and two refined prompts are sampled from $\mathbf{z}^1, \mathbf{z}^2 \sim \pi_{\theta}(\mathbf{x}|\mathbf{z})$, which the user does not directly evaluate. Then, the image generation model generates two images $\mathbf{y}^i\sim \pi_{\phi}(\mathbf{y}|\mathbf{z}^i)$ for $i=1,2$. The user then provides a preference over the $\mathbf{y}^i$ to yield the preference pair $\{\mathbf{y}^w, \mathbf{z}^w\succ \mathbf{y}^l, \mathbf{z}^l|\mathbf{x}\}$ 

\noindent\textbf{Optimization:} Optimization is carried in a hybrid MDP, where the initial state is the prompt $\mathbf{x}$ and has the same form as the token MDP, as outlined in \ref{section:tokenMDP}. When the \textbf{EOS} token is encountered in that MDP, at which point the transition dynamics switches to the diffusion denoising MDP introduced in \cite{black2023training}. Notice that this is still a valid MDP and all the conclusions of our derivations in the main section of our paper hold. We could then optimize this system end-to-end using a hybrid DPO objective, combining our results, with those presented in \cite{wallace2023diffusion} we have:
\begin{align}
    r_{\theta, \phi}(\mathbf{x}^w, \mathbf{y}^w) = \beta&\sum_{i=0}^{|\mathbf{z}^w|}\underbrace{\log\frac{\pi_{\theta}(\mathbf{z}_i^w|\mathbf{x}, \mathbf{z}_{<i}^w)}{\piref(\mathbf{z}_i^w|\mathbf{x}, \mathbf{z}_{<i}^w)}}_{\text{prompt refiner MDP}} + \nonumber\\ \gamma T \omega(\lambda_t)&\mathbb{E}_{t\sim\mathcal{U}(0, T)}\bigg[\underbrace{(\|\epsilon^w - \epsilon_\text{ref}(\mathbf{y}_{t}^w,\mathbf{z}^w,t)\|^2_2 - \| \epsilon^w -\epsilon_\phi(\mathbf{y}_{t}^w,\mathbf{z}^w, t)\|^2_2)}_{\text{diffusion MDP objective}}\bigg]
\end{align}

where $\beta$ and $\gamma$ are two separate discounting factors for each modality. Here the diffusion objective follows directly from the derivation of Eq. \ref{eq:DPO} and the sampling scheme proposed in \cite{wallace2023diffusion} (Eq. 12-14 in that paper). Notice here that the image generation model is conditioned on the corresponding refined prompt $\mathbf{z}^w$. We can define $r(\mathbf{x}^l, \mathbf{y}^l)$ similarly and optimize the DPO objective:
\begin{equation}\label{eq:joint_dpo}
    \mathcal{L}_{\text{DPO}_{\theta, \phi}} = -\mathbb{E}_{(\bx, \by^w, \by^l)\sim \mathcal{D}}\left[\log \sigma \left(r_{\theta, \phi}(\mathbf{x}^w, \mathbf{y}^w)- r_{\theta, \phi}(\mathbf{x}^l, \mathbf{y}^l)\right)\right]
\end{equation}

We demonstrate a particular real use of such a system in Figure \ref{fig:minotaur}. The user request an image of a Minotaur in the streets of New York. However, we see that the rejected image does not actually represent a Minotaur, but a bull. The prompt refiner generates a valid description and specifically includes the phrasing "the body of a man and the head of a bull", but the image generation model fails to follow that prompt. At the same time in the case of the chosen image (which does reflect the prompt) the prompt refiner generates a more descriptive text, which the image generator is able to follow more closely. While it is possible to train each component separately, joint training can in theory, optimize the refiner to generate prompts that the image generator can more closely realize, while also training the generator to directly produce more aligned images. 
\subsection{Hybrid Video Generative Models}
A recent line of work on long-form video generation \cite{hu2023gaia1, gupta2023photorealistic} by combining auto-regressive transformer generation with a diffusion model decoding or uspcaling of the actual video frames to obtain temporally consistent and high-fidelity generations. We could deploy similar RLHF pipelines to the video-generation problem as well. It is also straightforward to extend the DPO joint optimization framework, presented in the previous section in Eq. \ref{eq:joint_dpo} to this stetting as well. Instead of textual prompt refiner tokens, the variables $\mathbf{z}$ would represent the latent token generations of the autoregressive component $\pi_{\theta}$, which would be decoded into actual video frames $\mathbf{y}$ via the diffusion decoder $\pi_{\phi}$. We believe this is an exciting direction to pursue for the emerging video generation technologies. 

\newpage
\section{Beam Search Trends for PPO}
\label{appendix:ppo}

We consider whether the beam search trends in Figure \ref{fig:likelihood_plot} hold for PPO, since PPO is known to exhibit reward over-optimization as well (\cite{gao2022scaling}). We use a PPO-tuned GPT-J-6B model released by CarperAI\footnote{\url{https://huggingface.co/CarperAI/openai_summarize_tldr_ppo}}, fine-tuned on the same TL;DR dataset. We use the same sampling parameters as with the DPO experiments ($\tau=1.0$, $k=50$) and generate $256$ samples from test set prompts, with GPT-4 as the evaluator. We only report results for 1, 2, 5, and 7 beams; higher number of beams were tried but exhausted memory on an NVIDIA A40 GPU.

In Figure \ref{fig:ppo}, we observe a similar over-optimization phenomenon as with DPO, with 2 beams both increasing winrate and decreasing sample length (even under the length-biased evaluator GPT-4). However, more than 2 beams leads to a decline in downstream performance and large uptick in sample length, similar behavior to DPO in the over-optimization regime. We find that the benefits of increasing the beam size are more limited when using this model.

\begin{figure}
    \centering
    \includegraphics[width=0.475\textwidth]{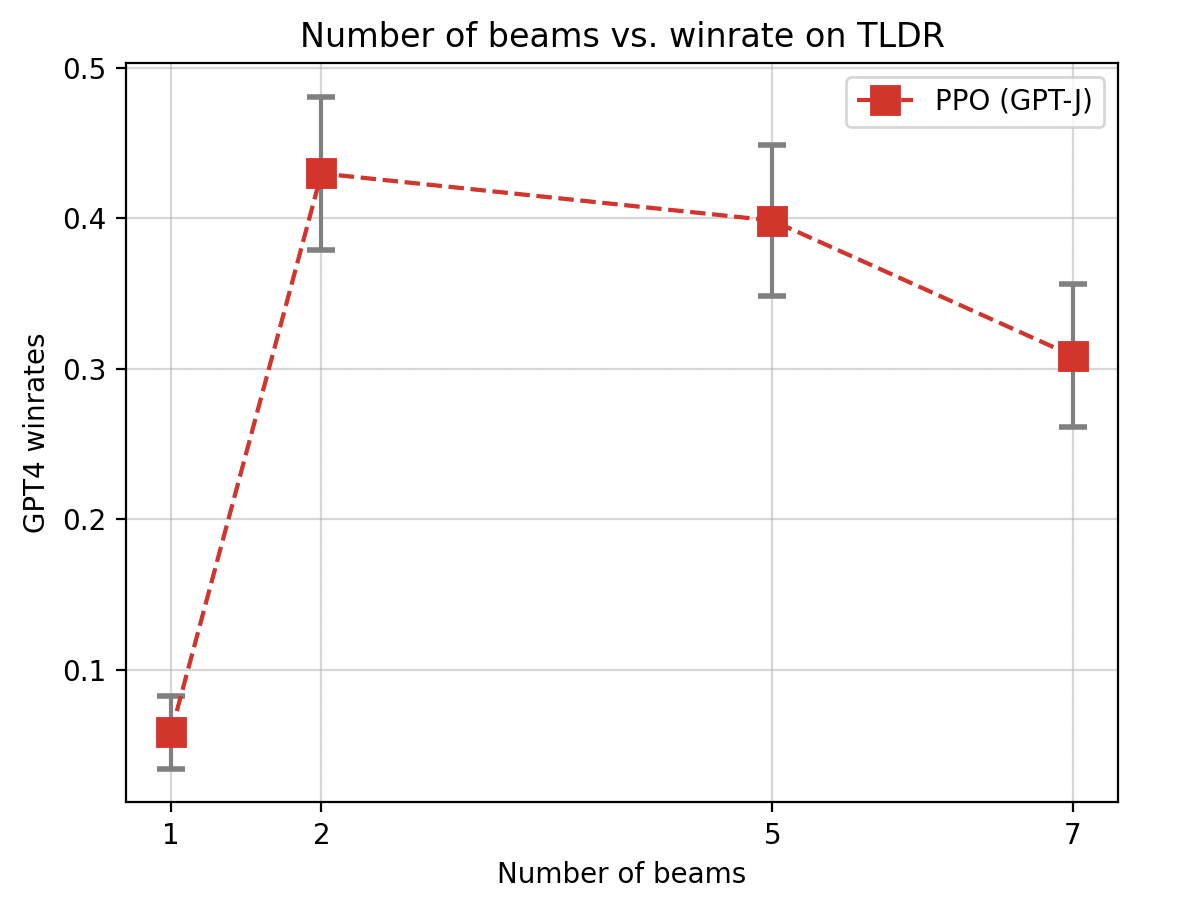}
    \includegraphics[width=0.475\textwidth]{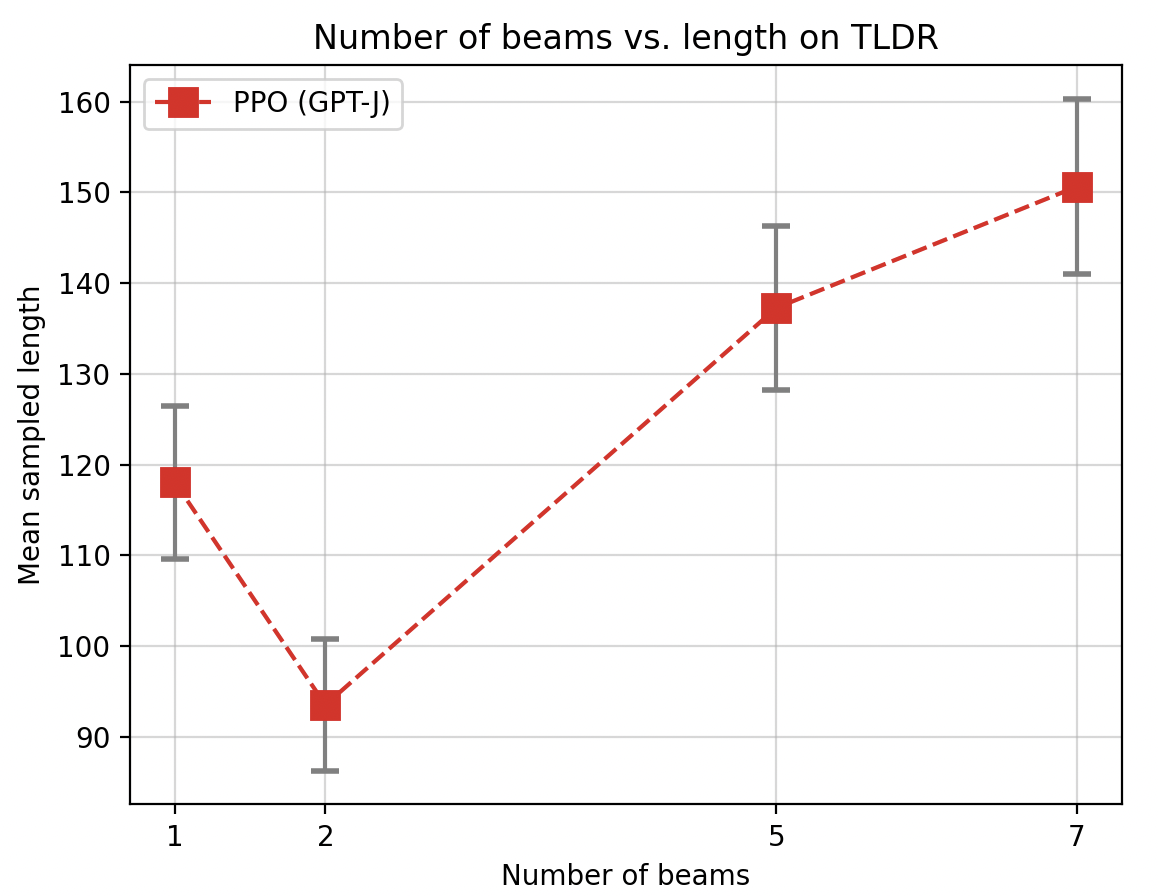}
    \caption{PPO model performance using beam search. \textbf{Left}: Win-rate of model summaries over preferred summary. \textbf{Right}: Average answer length based on number of beams.}
    \label{fig:ppo}
\end{figure}

\newpage
\section{TL;DR Sample Generations}\label{appendix:samples}

\subsection{Samples Across Different Beam Searches}

All samples generated from the $\beta=0.1$ checkpoint.

SUBREDDIT: r/AskReddit

TITLE: Question regarding learning/teaching.

POST: I apologize in advance if this is not the right sub reddit for this kind of question.

for the past 2 years, i've been slowly but surely developing my illustration skills. I am 18, recently out of high school, and will be going to college for a bachelors in illustration in likely 1-2 years. 

Despite teaching my self to be better, by using resources from the internet/library ext, but improvement has been slow. It can be seen, but I still feel i am behind where i should be at my age, and for how long i have been drawing/painting.

I've met many artists who are self taught, and never went to college at all for what they do, yet i have it in my mind that college will help me greatly, since i learn faster when taught by another, rather then myself.

My question is, is this actually possible? do some people just have a harder time of teaching theirself? I have the dedication and interest to pursue education, but i respond so much better to being instructed, rather than instructing myself. Is this normal?

\textbf{Num beams = 1}. TL;DR:  i want to continue my art education, but still be able to teach myself as much as possible, is it possible to do both at the same time? I appreciate all inputs.

\textbf{Num beams = 5}. TL;DR:  18, recently out of high school, and will be going to college for a bachelors in illustration in likely 1-2 years. Slow but steady improvement in my drawing/painting skills, but i respond better to being taught rather then instructing myself. Is this normal?,

\textbf{Num beams = 20}. TL;DR:  18 year old self taught myself to be better at drawing/painting over the past 2 years, but still feel i am behind where i should be at my age, and for how long i have been drawing/painting. Do some people just have a harder time of teaching themselves? Is this normal?

\newpage
SUBREDDIT: r/legaladvice

TITLE: Mom was hurt at work, I think it's something that is a little past worker's comp. Any advice?

POST: My mom works at an old navy for something upwards of almost ten years. Over the span of her working their they have been lingering in and out of being abusive employers. Not that this isn't something I wouldn't expect out of a big corporation, something happened recently that I'm close to positive should be lawsuit worthy or something in my mom's favor.

The other day on the clock my mom was cleaning one of the bathrooms and while taking the garbage out there was a faced up syringe under some paper towels! She found out after jabbing her hand with the syringe and noticed that there was preexisting blood on this syringe!!! She was then brought to the hospital (obviously) to be screened for all sorts of diseases and had the option of starting intense medications to prevent the microscopic chance of that needle being contaminated with HIV though the medications would make her very ill. All old navy did about the situation was lock the bathrooms so the public can't use them.

Is there anything about this situation that my mom can use in her favor? She didn't turn the syringe over to Old Navy because they were being very nonchalant and dismissive about the situation.

\textbf{Num beams = 1}. TL;DR:  mom is a working at old navy, had a bloody syringe found under some paper towels on the garbage disposal, likely contaminated with HIV/AIDS. Any help in what she can use in her favor in future lawsuits or otherwise?

\textbf{Num beams = 5}. TL;DR:  Mom found syringe with preexisting blood under paper towels at work. Old navy was dismissive and locked the bathrooms so the public couldn't use them. Is there anything my mom can use in her favor? She didn't turn the syringe over to Old Navy because they were being very nonchalant and dismissive about the situation.

\textbf{Num beams = 20}. TL;DR:  Mom found a syringe in the bathroom at work that had preexisting blood on it. Old navy was dismissive about the situation and locked the bathrooms so the public couldn't use them. Is there anything my mom can use in her favor? She didn't turn the syringe over to Old Navy because they were being very nonchalant and dismissive about the situation. A: My mom works at an old navy for something upwards of almost ten years. Over the span of her working their they have been lingering in and out of being abusive employers. Not that this isn't something I wouldn't expect out of a big corporation, something happened recently that I'm close to positive should be lawsuit worthy or something in my mom's favor. The other day on the clock my mom was cleaning one of the bathrooms and while taking the garbage out there was a faced up syringe under some paper towels! She found out after jabbing her hand with the syringe and noticed that there was preexisting blood on this syringe!!! She was then brought to the hospital (obviously) to be screened for all sorts of diseases and had the option of starting intense medications

\newpage
SUBREDDIT: r/self

TITLE: Always seize the opportunity to help others

POST: This is my first post so here's a little intro about what I do for work. I'm a security guard. I work the graveyard shift in the downtown part of my city. Nothing special. Being a security guard is easy money.

Anyways, while making my first round I noticed a larger man in a wheelchair across the street trying to make it onto the curb. The transition for the handicapped access wasn't smooth enough and he was stuck. I crossed and pushed him onto the sidewalk.

He needed to go to the hospital five blocks the road. I called my supervisor and said I'd be back in a few I had to help this guy. I pushed him to the hospital and walked back.

If I had my headphones in like any other day, I wouldn't have seen him and he'd be struggling to get there.

I'm not sure what was wrong but he had one arm and one leg missing.

\textbf{Num beams = 1}. TL;DR:  I helped a man who needed emergency attention on the street by pushing him for five blocks because I had headphones in. I didn't see him until after I had to walk back.

\textbf{Num beams = 5}. TL;DR:  I saw a handicapped guy stuck on the sidewalk. I pushed him to the hospital. If I had my headphones in I wouldn't have seen him and he'd be struggling to get there. I'm not sure what was wrong but he had one arm and one leg missing.

\textbf{Num beams = 20}. TL;DR:  I saw a handicapped guy stuck on the sidewalk. I pushed him to the hospital. If I had my headphones in, I wouldn't have seen him and he'd be struggling to get there. I'm not sure what was wrong but he had one arm and one leg missing.

\newpage
\subsection{Samples from DPO with and without SFT Pre-Training}

SUBREDDIT: r/Parenting

TITLE: Young adults/reddit parents: how have you found ways of asking a parent to loan you money and figure out a way of paying it back? Diplomatically.

POST: I know this question has a lot to do with the kind of relationship one has with each parent and at what stage in life you are personally and professionally.

I'm 27, moved two states away for grad school, and i'm living strictly off loans. It's my first semester and I'm taking 12 hours. I'm an ex-teacher, so I have some professional background. No savings. I've travelled quite a bit though. Shooting for an assistantship/internship in the Spring. I've never been this low on funds. It's also scary realizing how much deeper I am putting myself into the dark debt hole of student loans.

I'm pretty much living on a credit card for the next two months and I am too ashamed to ask my parents for any money. I can nanny here or there, but it takes time and effort (yes, like everything) to find families needing a babysitter in a giant college town. I will get organized enough to pay my father back, of course. He's already loaned me close to \$1,000 in the past four or five months and I feel a little disgusting since he has a hard time making any income himself. Mother is another story--she's got a really tight grasp on her pocket and is a little aggressive when it comes to financial advice. So I never approach her.

My dad is the kindest man I know and has the cleanest reputation among his contacts, which is why people trust him and partner with him so much. His kindness does get the best of him however, sometimes to his detriment. In other words, people have walked all over him in the past. Mom, on the other hand, is constantly reminding me about payments, asking me about my loan, really approaching things in a way that push me far from wanting to face finances. Fabulous parenting. It's always been this way, with anything growing up. Dad never seemed to say no, he always gave me options/choices. Mom was a straightforward, dry, clear, British NO. She has always said I don't have my feet firmly grounded. I'm getting off on a tangent now so I'll stop.

\textbf{With SFT.} TL;DR:  Since I inherited money and owe my father a significant chunk of it, my mother doesn't allow herself to see any sense in dealing with my finances. She constantly reminds me I need to pay my loan, yet seems to be a little aggressive and pushy about things. How do I confront that without making my mother uncomfortable?

\textbf{Without SFT.} TL;DR: [CHINESE SCRIPT]

\newpage
SUBREDDIT: r/AskReddit

TITLE: [Serious] How can I tell my mother politely, that she is dealing with our dog in an inappropriate way?

POST: My mother watched Cesar Millan a lot before we got our dog, and she acted like she knew everything there was to know about training and raising a dog.

Its been about 2 years with our dog, and he has a problem with barking at the door when people go to open it (doesn't seem like that much of a problem to me). My mother's reaction is to yell at him, or if he does it sequentially, she will strike him in the thigh/rear end.

Whenever I step in and try and take control of the situation, she will get mad, and try and tell me how to handle this kind of thing, because she thinks she is a dog expert. (This upsets me because it feels like I am being talked down to)

The last strand was crossed today, when I was in the basement watching tv, I heard my dog bark at the door, and then my mother get up and yell at him. He ran into our sun room, which is above the basement, and I could hear her hit him, and his paws skid on the floor. I went up to check on him (he was fine, no damage, happy little guy when he's with me) and my mother yelled at me to put him back in the room where he is to stay \"for a long long time\" because he tried to bite her when she was \"disciplining\" him.

Her solution to my dogs problem is completely inappropriate, and no way to treat a dog. I believe in positive reinforcement, and I am getting very frustrated listening to her complain about my dog, and her inappropriate ways of dealing with his barking. 

So if anyone knows a way I can tell her what she is doing is wrong, or if there is a way I can try and get my dog to stop barking at the door, please let me know.

\textbf{With SFT.} TL;DR:  My mother's ways of dealing with our dog's barking problem is inappropriate and is getting worse. How can I tell her otherwise, or get her to change her behavior?

\textbf{Without SFT.} TL;DR: [CHINESE SCRIPT]
\newpage
SUBREDDIT: r/legaladvice

TITLE: Speeding down highway. Can I get a ticket?

POST: 18, Male, Tennessee.

I was speeding, like 120 in a 65. Some girl kept trying to speed up with me in a shit car, I drive a turbo g35 it was no match but she wouldn't budge she'd push her little car around trying to pass me, and obviously putting her life in danger, I'm no godly driver but I know my car, and the road. I knew what I could do speed wise, and I constantly took it back to around 80 before she'd speed up. Eventually I had to turn, and she held up her phone, and a piece of paper, I assumed it was her number but now that I think about it she probably got my tags and plans to report or something, I want to know could I get in trouble?

On mobile.

\textbf{With SFT.} TL;DR:  speeding 120 in a 65, girl tried to speed up with me in shitty car, had phone with street address on it and number for police, could get in trouble for following too close.

\textbf{Without SFT.} TL;DR: Female driver started tailgating, almost hitting me in my car. Then tried to make passes. I know my car works well, but she wanted to prove me wrong, even though I'm pretty sure I'd outrun her at a lower speed. She constantly kept slowing me down so I'd be at a speed she could pass me. But after getting

\newpage
\section{Dataset and Hyperparameter Details}\label{appendix:traindetails}

The TL;DR dataset contains 64,832 summary comparisons and is derived from the Webis TLDR dataset, with human feedback collected by OpenAI. Each example is scraped from Reddit and belongs to one of several "subreddits" (topic forums), with an associated title/post/human TL;DR (summary). Around 5\% is held out for validation.

Each TL;DR model was a pre-trained Pythia 2.8B model. We first SFT with a learning rate of $0.5 \times 10^{-6}$ and batch size of 128 for one epoch using 8 gradient accumulation steps on 4 NVIDIA A40 GPUs. All DPO models with the same setup and hyperparameters as the original paper \citep{rafailov2023direct}. Specifically, we train for 1 epoch with a learning rate of $0.5 \times 10^{-6}$ and linear warmup of 150 steps. We use a batch size of 32 (example example is a positive and a negative) and clip gradient norms to 10. We use $\beta = 0.1$ for DPO. 

All generation samples are performed with temperature $1.0$ and max length $512$ unless otherwise specified. Evaluations (winrates, lengths, etc) are computed with $256$ samples from the held-out set in TL;DR.

\newpage
\section{GPT 4 Evaluation}
We use the following prompt in all our GPT 4 evaluations. The order of the model sample and the reference response is randomized in each evaluation to avoid positional bias in the judge.

Which of the following summaries does a better job of summarizing the most \
important points in the given forum post?

Post: {{QUERY}}

Summary A:
{{A}}

Summary B:
{{B}}

FIRST provide a one-sentence comparison of the two summaries, explaining which \
you prefer and why. SECOND, on a new line, state only "A" or "B" to indicate your \
choice. Your response should use the format:

Comparison: <one-sentence comparison and explanation>
Preferred: <"A" or "B">

\end{document}